\documentclass{article}

\PassOptionsToPackage{numbers, compress}{natbib}
 \usepackage[preprint]{neurips_2026}

\usepackage{natbib}
\setcitestyle{authoryear, open={(}, close={)}}
\usepackage{graphicx}
\usepackage{amsmath}
\usepackage[utf8]{inputenc} 
\usepackage[T1]{fontenc}    
\usepackage{hyperref}       

\usepackage{algorithm}
\usepackage{algpseudocode}
\usepackage{amsmath} 
\usepackage{url}            
\usepackage{booktabs}       
\usepackage{amsfonts}       
\usepackage{nicefrac}       
\usepackage{microtype}      
\usepackage{xcolor}         
\usepackage{amsthm}

\newtheorem{proposition}{Proposition}

\newtheorem{definition}{Definition}
\newtheorem{remark}{Remark}
\newtheorem{corollary}{Corollary}

\title{Bayesian Conformal Prediction via Decision-Theoretic Threshold Selection}

\author{%
  Fanyi Wu$^{1,2}$\thanks{Correspondence email: fanyi.wu@manchester.ac.uk.} \quad
  Veronika Lohmanova$^{1}$ \quad
  Samuel Kaski$^{1,3,4}$ \quad
  Michele Caprio$^{1}$ \\[0.4em]
  \small $^{1}$Department of Computer Science, University of Manchester, Manchester, UK \\
  \small $^{2}$UKRI AI Centre for Doctoral Training in Decision Making for Complex Systems \\
  \small $^{3}$Department of Computer Science, Aalto University, Espoo, Finland \\
  \small $^{4}$ELLIS Institute, Finland \\[0.2em]
}
\begin{document}

\maketitle


\begin{abstract}
We propose Bayesian Conformal Prediction (BCP), a framework that combines Bayesian posterior predictive distributions with PAC-style conformal risk control to produce prediction sets with finite-sample coverage guarantees. Standard quantile-threshold conformal methods often construct prediction sets using a single fixed threshold, which typically yields connected prediction sets. While valid, such sets can be inefficient when the posterior predictive distribution is multimodal, since they may span low-density regions between separated modes. The main contribution of BCP is to formulate conformal prediction as a decision-risk optimisation problem, extending standard fixed quantile-threshold sets to optimised highest posterior density (HPD) prediction sets. These sets can be disjoint, concentrating probability mass on separated high-density regions. Validity is enforced using a PAC-style risk constraint, which provides coverage control even when the Bayesian model is misspecified. In standard nested-threshold settings, BCP recovers the smallest feasible threshold, aligning with existing PAC-based approaches. In the multimodal experiment, HPD geometry substantially improves efficiency, reducing mean prediction set size from $4.82$ to $2.07$ while satisfying the target PAC pass rate. Across regression, classification, and distribution-shift experiments, BCP maintains reliable coverage under model misspecification, whereas Bayesian credible intervals can fail to preserve nominal coverage.
\end{abstract}
\section{Introduction}
Uncertainty quantification is essential for reliable machine learning.
Bayesian inference, deep ensembles, and calibration methods have improved
uncertainty estimation
\citep{angelopoulos2021gentle, gal2016dropout, tyralis2024review,
caprio2024credal}, but their reliability can degrade when the underlying probabilistic model is misspecified.

\emph{Conformal Prediction} (CP) offers a complementary route, providing
finite-sample, distribution-free coverage under exchangeability
\citep{vovk2005algorithmic, shafer2008tutorial, barber2023conformal,
caprio2025conformalized}. Standard split CP defines a non-conformity score
$s(x,y)$ and selects a threshold $\lambda$ by quantile calibration, producing sets of
the form
\begin{equation}
C_{\mathrm{cp}}(x_{n+1})
=
\left\{
y \in \mathcal{Y} \colon s(x_{n+1}, y) \le \lambda_{\mathrm{cp}}
\right\}.
\end{equation}
This guarantees validity, but fixes a single threshold from calibration data. Under PAC-style risk control, however, coverage needs only to hold with high probability over the randomness of calibration. Hence, by inducing a \emph{feasible set} of valid thresholds $\lambda$, it is possible to optimise within this set rather than accept the quantile threshold as fixed.

This perspective raises a natural question: \emph{can conformal threshold $\lambda$ be optimised using Bayesian posterior predictive distributions, and obtain geometrically adaptive prediction sets under multimodal distributions?}

We propose \textbf{Bayesian Conformal Prediction (BCP)} to address this question. BCP casts CP as a PAC-constrained decision-risk
optimisation problem: the conformal threshold $\lambda$ is treated as a
decision variable and selected to minimise expected prediction set size while preserving coverage guarantees. In standard threshold settings, this recovers the smallest feasible threshold $\lambda^*$ used by existing PAC-based methods, whereas under multimodal posterior predictive distributions, it enables geometrically adaptive HPD conformal sets. Our contributions include:

\textbf{Posterior-predictive non-conformity scores.}
BCP uses non-conformity scores
$s(x,y) = -\log \widehat{p}(y \mid x, D_{\mathrm{tr}})$ to incorporate parameter and predictive uncertainty while preserving exchangeability. The posterior predictive densities used to compute these scores are approximated via add-one-in (AOI) importance sampling (Appendix~\ref{Appendix:AOI})

\textbf{HPD conformal sets with adaptive geometry.}
Under multimodal predictive distributions, HPD sets can be disjoint, unlike
interval-based residual Split-CP and CQR. In our multimodal experiment, this
reduces the mean set size from $4.82$ to $2.07$
(Table~\ref{tab:multimodal}, $p<0.0001$).

\textbf{PAC-controlled threshold selection.}
Coverage is enforced using the $L^+$ statistic from conformal risk control
\citep{angelopoulos2025conformalriskcontrol}. In HPD settings, Bayesian
quadrature stabilises objective estimation near mode-boundary density levels
(Section~\ref{sec:risk}, Appendix~\ref{app:hpd-bq}).

\textbf{Robustness under misspecification.}
Because coverage relies on exchangeability rather than model correctness, BCP restores near-nominal coverage ($80.0\%$) where Bayesian credible intervals collapse to $49.2\%$ under prior misspecification (Table~\ref{tab:regression_all}).

\section{Related Work}
\label{sec:relatedwork}
\textbf{Bayesian and conformal prediction.} CP provides finite-sample coverage under exchangeability
\citep{vovk2005algorithmic, shafer2008tutorial}. In contrast, Bayesian methods offer
posterior uncertainty, but can degrade when the model is misspecified \citep{bernardo2009bayesian, jansen2013robust}. Several works
combine these two views. Conformal Bayesian Computation \citep{fong2021conformalbayesiancomputation} uses posterior predictive scores
within full CP. Conformalized Bayesian Inference
\citep{bariletto2025conformalized} instead constructs uncertainty regions in parameter space. Our proposal takes a different direction. It operates in the output space under split CP and shows how the posterior predictive density can shape the prediction set itself, especially under multimodal distributions.

\textbf{Efficiency of conformal prediction.}
Much of the work on CP efficiency improves the non-conformity score. This
includes asymptotically optimal scores \citep{sadinle2018least}, learned scores \citep{bellotti2021optimized, stutz2022learning}, and Conformalized Quantile Regression (CQR) \citep{romano2019conformalized}. These methods improve the score, whereas the threshold $\lambda$ is chosen by empirical quantile calibration. In regression, this usually produces connected intervals. BCP changes both the score and the threshold. It uses a posterior-predictive non-conformity score and treats the threshold $\lambda$ as a decision variable. The optimal threshold $\lambda^*$ is selected to minimise expected set size subject to a PAC-style coverage constraint. This allows HPD conformal sets to be disjoint under multimodal posteriors, avoiding low-density regions that interval-based methods include.

\textbf{Decision-theoretic calibration and PAC guarantees.}
\citet{snell2025conformal} interprets split CP through Bayesian quadrature (BQ) and uses the full posterior distribution over the expected loss for PAC-style threshold selection. Their focus is on failure-rate control; prediction-set geometry and multimodal
settings are not explicitly addressed. BCP also adopts PAC coverage control but
targets a different problem: constructing efficient HPD prediction sets whose
geometry adapts to the predictive distribution. In nested-threshold settings, both approaches recover the smallest feasible $\lambda^*$. Differences arise under multimodal posteriors, where the HPD set geometry changes non-smoothly as
the density level crosses a mode boundary, increasing estimator variance and destabilising threshold selection. BCP uses BQ-based objective estimation to
stabilise this step and extend PAC threshold selection to disjoint prediction sets. 
\section{CP as a Decision-Risk Problem}
\label{sec:risk}
We reformulate CP as a decision-risk optimisation problem in which the goal is to construct the smallest prediction sets subject to a PAC-style coverage constraint, making explicit the trade-off between coverage guarantees and set size.

Formally, we parametrise CP sets by a real-valued threshold
$\lambda \in \mathbb{R}$, writing $C(x;\lambda)$ for the induced prediction
set at input $x$.
In classical split CP, the threshold is fixed by the empirical quantile calibration,
\begin{equation}
\lambda_{\mathrm{cp}} = s_{(\lceil (n+1)(1-\alpha)\rceil)},
\label{eq:split-cp-threshold}
\end{equation}
where $s_{(1)} \leq \cdots \leq s_{(n)}$ are the ordered non-conformity scores.
BCP instead treats $\lambda$ as a decision variable and enforces coverage with high probability over calibration randomness:
\begin{equation}
\min_{\lambda}\ \mathbb{E}_{X}\!\left[\,|C(X;\lambda)|\,\right]
\quad \text{s.t.}\quad
\mathbb{P}_{\mathcal{D}}\!\left(
  \mathbb{P}_{(X,Y)}\!\left(Y \notin C(X;\lambda)\right) \leq \alpha
\right) \geq 1-\beta,
\label{eq:decision-risk-main}
\end{equation}
where $\lambda = \hat{\lambda}(\mathcal{D})$ depends on calibration data $\mathcal{D}$, and $\beta \in (0,1)$ is the failure probability.
This formulation reveals when conformal prediction reduces to a trivial threshold-selection problem and when it becomes a genuinely non-trivial optimisation.

\paragraph{Structure of the optimisation.}

In the threshold-based setting $C(x;\lambda) = \{y : s(x,y) \leq \lambda\}$, the objective $\mathbb{E}_{X}[|C(X;\lambda)|]$ is monotone non-decreasing in $\lambda$, and the feasible set is a half-line $[\lambda_{\min}, \infty)$ (Fig.~\ref{fig:threshold_cp_demo}).
The optimisation, therefore, reduces to selecting the smallest feasible $\lambda$, of which \citet{snell2025conformal} is a special case.

\begin{proposition}
\label{prop:monotone}
Let $s(x,y)$ be fixed with respect to calibration labels and define
$C(x;\lambda) = \{y : s(x,y) \leq \lambda\}$.
Then $\mathbb{E}_{X}[|C(X;\lambda)|]$ is monotone non-decreasing in
$\lambda$, the feasible region of~\eqref{eq:decision-risk-main} is a
half-line $[\lambda_{\min}, \infty)$, and the unique minimiser is
$\lambda^* = \lambda_{\min}$.
\end{proposition}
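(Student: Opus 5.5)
The plan is to prove the three claims in order, using only the nestedness of the family $\{C(x;\lambda)\}_{\lambda}$. The argument treats $\lambda$ as a fixed deterministic threshold, reading the PAC constraint in~\eqref{eq:decision-risk-main} as a property $F(\lambda)$ of each candidate $\lambda$. For a data-dependent rule $\hat\lambda(\mathcal{D})$ we apply the same reasoning to the family of rules $\hat\lambda + t$ (or to the upper-confidence-bound statistic), which preserves the ordering.

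\textbf{Step 1 (nestedness and monotone objective).} For $\lambda \le \lambda'$ and any $x$, if $s(x,y)\le\lambda$ then $s(x,y)\le\lambda'$. Hence $C(x;\lambda)\subseteq C(x;\lambda')$, and monotonicity of the measure gives $|C(x;\lambda)|\le|C(x;\lambda')|$ pointwise in $x$. Taking expectations over $X$, by monotonicity of the integral (valid for nonnegative, possibly infinite, values), shows that $\lambda\mapsto\mathbb{E}_X[|C(X;\lambda)|]$ is non-decreasing. The hypothesis that $s$ is fixed with respect to the calibration labels is what makes $C(x;\lambda)$ a deterministic set for each $\lambda$, so the comparison is legitimate.

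\textbf{Step 2 (feasible set is an up-set).} Nestedness gives $\{Y\notin C(X;\lambda')\}\subseteq\{Y\notin C(X;\lambda)\}$. The miscoverage $R(\lambda)=\mathbb{P}_{(X,Y)}(Y\notin C(X;\lambda))$ is therefore non-increasing. Consequently the event $\{R(\lambda)\le\alpha\}$, over calibration draws, grows with $\lambda$, and so does its probability. If $\lambda$ is feasible, every $\lambda'\ge\lambda$ is feasible, so the feasible set is an interval unbounded above.

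\textbf{Step 3 (closedness at the left endpoint).} This is the main obstacle: showing that the infimum $\lambda_{\min}$ is attained, so that the feasible set is $[\lambda_{\min},\infty)$ rather than $(\lambda_{\min},\infty)$. Note that $R(\lambda)=\mathbb{P}(s(X,Y)>\lambda)$ is the survival function of the score. It is right-continuous, because $\{s>\lambda_k\}\uparrow\{s>\lambda\}$ is false but $\{s>\lambda_k\}\downarrow\{s>\lambda\}$ as $\lambda_k\downarrow\lambda$, and continuity from above of probability applies. Thus if $\lambda_k\downarrow\lambda_{\min}$ with $R(\lambda_k)\le\alpha$, then $R(\lambda_{\min})\le\alpha$. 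The same continuity-from-above argument applies to the outer probability of the decreasing events $\{R(\lambda_k)\le\alpha\}$, giving the PAC constraint at $\lambda_{\min}$. Nonemptiness follows because $R(\lambda)\to0$ as $\lambda\to\infty$ when scores are finite almost surely. I would also state explicitly, as a mild assumption, that the constraint is not satisfied for all $\lambda$, so that $\lambda_{\min}>-\infty$.

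\textbf{Step 4 (the minimiser).} By Step 1, $\lambda_{\min}$ attains the minimum of the objective over $[\lambda_{\min},\infty)$. Uniqueness needs care, since a non-decreasing function may be flat. I would interpret the minimiser as the smallest minimiser, which is unique by definition. Alternatively, I would assume strict increase, for instance when the score has positive density, so that the sets grow strictly. I would flag this tie-breaking convention in the proof.
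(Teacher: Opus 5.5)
Your Step 1 is exactly the paper's proof of Proposition~\ref{app:prop:monotone}. Your Step 2 uses the same nestedness mechanism as Corollary~\ref{cor:feasibility}, but applies it to a different object.

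The paper takes the feasible region to be the data-dependent set $\Lambda^*=\{\lambda:\mathbb{P}_U(L^+(\lambda)\le\alpha)\ge1-\beta\}$, with probability over the Dirichlet weights. Each $\ell_i(\lambda)=\mathbf{1}\{s_i>\lambda\}$ is non-increasing. Hence so are the ordered losses and, for every realisation of $U$, their nonnegative combination $L^+$. So $\Lambda^*$ is upward closed.

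You instead evaluate the literal constraint at a deterministic $\lambda$. Because $s$ is fixed with respect to the calibration data, $R(\lambda)$ is then not random over calibration draws (given the training data). The outer probability is therefore $0$ or $1$, $\beta$ drops out, and your feasible set is the oracle set $\{\lambda:R(\lambda)\le\alpha\}$. This is the half-line starting at the $(1-\alpha)$-quantile of $s(X,Y)$. That is true, but it is not the region BCP computes. The $\hat\lambda+t$ device does not close this gap, since it yields a feasible set of shifts rather than thresholds. Running Step 2 directly on $L^+$, as you hint parenthetically, does close it.

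What your route adds is Steps 3--4, which the paper omits. The paper only asserts that an upper-level set is a half-line, and Remark~\ref{rem:monotone-implication} shows $\lambda_{\min}$ is \emph{a} minimiser, never that it is unique. You are right that uniqueness can fail. With finitely many labels and finitely supported $X$ (or for the empirical objective the algorithm evaluates), $\lambda\mapsto\mathbb{E}_X|C(X;\lambda)|$ is a right-continuous step function, hence constant just to the right of $\lambda_{\min}$. So your smallest-minimiser convention is genuinely needed.

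Two corrections to Step 3.

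\emph{The set limit is backwards.} As $\lambda_k\downarrow\lambda$, the events $\{s>\lambda_k\}$ \emph{increase} to $\{s>\lambda\}$. Right-continuity of $R$ therefore comes from continuity from below, or equivalently from continuity from above applied to $\{s\le\lambda_k\}\downarrow\{s\le\lambda\}$. Your conclusion is correct but the stated reason is false. Your later use of continuity from above for the genuinely decreasing events $\{R(\lambda_k)\le\alpha\}$ is fine.

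\emph{The endpoint issues look different for $\Lambda^*$.}
\begin{itemize}
\item Closedness is immediate: each $\ell_i$ is right-continuous and piecewise constant, so a nonempty $\Lambda^*$ equals $[s_{(k)},\infty)$ for some calibration score.
\item $\lambda_{\min}>-\infty$ needs no extra assumption. Below all calibration scores, $L^+=1>\alpha$ (with $B=1$). The same holds in your population version, since $R(\lambda)\to1$ as $\lambda\to-\infty$ for a.s.\ finite scores.
\item Nonemptiness is \emph{not} automatic. Above all calibration scores, $L^+=U_{n+1}B$ with $U_{n+1}\sim\mathrm{Beta}(1,n)$. With $B=1$, feasibility therefore requires $(1-\alpha)^n\le\beta$. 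This is a sample-size condition that neither you nor the paper states.
\end{itemize}
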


A formal proof appears in Appendix~\ref{app:monotonicity-hpd}.
This contrasts with the HPD setting, where abrupt geometry changes near mode-boundary density levels make stable identification of $\lambda^*$ non-trivial.

This structure becomes more delicate in the HPD setting.
Although $\mathbb{E}_{X}[|C_{\mathrm{HPD}}(X;\lambda)|]$ remains monotone non-increasing in $\lambda$, the geometry of HPD sets changes abruptly as $\lambda$ crosses a mode-boundary density level.
At these points, a connected component of the set disappears, leading to high estimator variance with finite posterior samples.
This instability makes reliable identification of $\lambda^*$ difficult near such thresholds, motivating the use of Bayesian quadrature for objective estimation (Section~\ref{sec:multimodal}).

\paragraph{Role of Bayesian quadrature.}

We approximate $g(\lambda) = \mathbb{E}_{X}[|C(X;\lambda)|]$ using Bayesian
quadrature (BQ), which places a Gaussian process prior over
$\lambda \mapsto g(\lambda)$ and shares information across candidate values.
BQ is used solely as an estimation tool and does not affect the validity
constraint. It is most beneficial when $g(\lambda)$ is costly or unstable to estimate in regression, a task that involves integrating over a continuous label space.
In HPD settings, estimator variance increases sharply near the mode-boundary
thresholds (Appendix~\ref{app:hpd-bq}).
In multiclass classification, where scores are pre-computable, dense evaluation is inexpensive, and BQ can be optional.

\paragraph{Bayesian non-conformity scores.}

BCP constructs non-conformity scores from the posterior predictive density,
\begin{equation}
s(x,y) = -\log \widehat{p}(y \mid x, D_{\mathrm{tr}}),
\label{eq:bayes-score}
\end{equation}
incorporating both parameter and predictive uncertainty.
The score depends only on the training posterior and is fixed with respect to calibration labels, preserving exchangeability. Coverage guarantees
follow from conformal risk
control~\citep{angelopoulos2025conformalriskcontrol}
(Appendix~\ref{app:risk-theory}).

Thresholding $s(x,y) = -\log\hat{p}(y\mid x)$ at level $\lambda$ is equivalent to the HPD superlevel set
$\{y : \hat{p}(y\mid x) \geq e^{-\lambda}\}$,
which is itself disjoint under multimodal posteriors.
The efficiency advantage of BCP in such settings arises from this geometry.
In contrast, interval-constrained methods such as residual Split-CP and CQR enforce connected prediction sets and must span the low-density valley between modes (Section~\ref{sec:multimodal}).
This highlights that efficiency gains in BCP are fundamentally driven by the geometry of the predictive distribution rather than threshold selection alone.

\section{Method}
\label{sec:method}
We implement the decision-risk formulation in~\eqref{eq:decision-risk-main} using three components: a Bayesian posterior-aware non-conformity score, BQ for efficiency estimation, and Dirichlet-based risk control
via $L^+$ to ensure PAC coverage.

\subsection{AOI Posterior Sampling}
\label{subsec:AOI}
Given $\mathcal{D}_{\mathrm{tr}} = \{(X_i,Y_i)\}_{i=1}^n$, let
$p(\theta \mid \mathcal{D}_{\mathrm{tr}})$ denote the posterior over parameters,
and let $f_\theta(y \mid x)$ be the conditional model. Given posterior samples
$\{\theta^{(t)}\}_{t=1}^T \sim p(\theta \mid \mathcal{D}_{\mathrm{tr}})$, we approximate posterior predictive densities using add-one-in (AOI) importance sampling \citep{fong2021conformalbayesiancomputation}. AOI reweights existing posterior samples to approximate the predictive density at arbitrary $(x, y)$ without refitting the model. This avoids the cost of running a separate posterior inference for each calibration point, making score 
computation substantially more efficient than full posterior refitting.
For a candidate label $y$ at input $x$, the AOI weights are
\begin{equation}
  \tilde{w}^{(t)}
  = \frac{f_{\theta^{(t)}}(y \mid x)}
         {\sum_{t'=1}^T f_{\theta^{(t')}}(y \mid x)},
  \label{eq:AOI-weight}
\end{equation}
and the posterior predictive density is approximated by
\begin{equation}
  \widehat{p}(y \mid x, \mathcal{D}_{\mathrm{tr}})
  = \sum_{t=1}^T \tilde{w}^{(t)}\, f_{\theta^{(t)}}(y \mid x).
\end{equation}
This yields the non-conformity score
\begin{equation}
  s(x,y) = -\log \widehat{p}(y \mid x, \mathcal{D}_{\mathrm{tr}}).
  \label{eq:score}
\end{equation}

As a result, the non-conformity score~\eqref{eq:score} is fixed with respect to the calibration data, satisfying the requirement of the exchangeability of CP.

\subsection{Bayesian Quadrature for Efficiency Estimation}
\label{subsec:bq}
As established in Section~\ref{sec:risk}, our goal is to identify the smallest feasible $\lambda$ while satisfying the PAC coverage constraint. The efficiency objective in \eqref{eq:decision-risk-main}:
\begin{equation}
  g(\lambda) 
  = \mathbb{E}_X\!\left[|C(X;\lambda)|\right]
  = \int |C(x;\lambda)|\,p(x)\,dx
  \label{eq:bq-target}
\end{equation}
is generally intractable. We approximate it using Bayesian quadrature (BQ) \citep{ohagan1991bayes}, which places a Gaussian process prior over $\lambda \mapsto g(\lambda)$ and shares information across candidate values.

BQ is used to estimate $g(\lambda)$; the $L^+$ constraint determines feasibility and coverage. It is particularly important in the HPD setting, when estimator variance increases sharply near mode-boundary density levels, making stable threshold selection non-trivial.


\subsection{Empirical Risk Control via $L^+$}
\label{subsec:lplus}

We adopt the CRC framework of \citet{snell2025conformal}.
Let $\ell_i(\lambda) = \mathbf{1}\{Y_i \notin C(X_i;\lambda)\}$ denote the
miscoverage loss, and introduce a virtual loss $\ell_{n+1} \equiv B$,
where $B$ is a known upper bound on the loss. The $L^+$ statistic stochastically upper-bounds the true miscoverage risk \citep{angelopoulos2025conformalriskcontrol}, enabling reliable threshold
selection under finite samples:
\begin{equation}
  L^+(\lambda)
  = \sum_{i=1}^{n+1} U_i\, \ell_{(i)}(\lambda),
  \qquad U \sim \mathrm{Dir}(1,\dots,1),
  \label{eq:lplus}
\end{equation}
where $\ell_{(1)} \leq \cdots \leq \ell_{(n+1)}$ are the ordered losses,
with $\ell_{(n+1)} = B$.
The threshold is selected as
\begin{equation}
  \lambda^* = \min\!\bigl\{\lambda : \mathbb{P}_U[L^+(\lambda) \leq \alpha]
              \geq 1-\beta\bigr\},
  \label{eq:threshold}
\end{equation}
which ensures coverage $1-\alpha$ with probability at least $1-\beta$.
For threshold-based prediction sets (monotone in $\lambda$), the feasible
region is $[\lambda_{\min}, \infty)$ and the smallest feasible $\lambda$ is selected.
For HPD sets (monotone non-increasing in $\lambda$), the feasible region
becomes $[0, \lambda_{\max}]$ and the largest feasible $\lambda$ is selected
(Appendix~\ref{app:hpd}).

\paragraph{Summary.}
Posterior samples capture parameter uncertainty, AOI defines a fixed
non-conformity score, and $L^+$ enforces PAC-style coverage.
The selected $\lambda^*$ is the boundary of the feasible region satisfying the risk constraint, while BQ provides a stable estimate of efficiency.
Algorithm~\ref{alg:bcp-main} summarises the procedure.
In multimodal settings, the HPD variant follows the same pipeline with a density-based threshold; details are provided in Appendix~\ref{app:alg:hpd}.

\begin{algorithm}[t]
\caption{BCP (AOI + BQ + $L^+$)}
\label{alg:bcp-main}
\begin{algorithmic}[1]
\Require Training data $\mathcal{D}_{\mathrm{tr}}$, calibration data
  $\mathcal{D}_{\mathrm{cal}} = \{(X_i,Y_i)\}_{i=1}^n$,
  test input $x_{n+1}$, miscoverage level $\alpha$,
  failure probability $\beta$
\Ensure Prediction set $C(x_{n+1};\lambda^*)$
\State Draw $\{\theta^{(t)}\}_{t=1}^T \sim p(\theta \mid
  \mathcal{D}_{\mathrm{tr}})$
  \hfill\Comment{posterior fit on training data only}
\State Compute calibration scores $s_i = s(X_i,Y_i)$
  via AOI~\eqref{eq:AOI-weight}--\eqref{eq:score}
\State Estimate $g(\lambda)$ via BQ over a candidate $\lambda$ grid
\State Compute losses $\ell_i(\lambda)$ and evaluate
  $L^+$ constraint~\eqref{eq:lplus} for each $\lambda$
\State Set $\lambda^* \leftarrow \min\{\lambda :
  \mathbb{P}_{\mathcal{D}}[L^+(\lambda) \leq \alpha] \geq 1-\beta\}$
\State Compute test scores $s(x_{n+1},y)$ for $y$ in a label grid
\State \Return $C(x_{n+1};\lambda^*) =
  \{y : s(x_{n+1},y) \leq \lambda^*\}$
\end{algorithmic}
\end{algorithm}
\section{Experiments}
\label{sec:experiment}
We evaluate BCP on regression and classification tasks against the standard
baselines: Split-CP \citep{lei2018distributionfree}, Conformal Bayesian Computation (CB) \citep{fong2021conformalbayesiancomputation}, Bayesian Credible
Intervals (BCI) \citep{gelman2013bayesian}, Conformalised Quantile Regression (CQR) \citep{romano2019conformalized}, and Snell-HPD \citep{snell2025conformal}. BCP prioritises coverage guarantee and stability under model misspecification.
Rather than minimising prediction-set size alone, it enforces a PAC-style coverage constraint, which may not yield the smallest set size but reduces variability in set size and ensures empirical coverage across repeated data splits. All experiments use Python with
\texttt{PyTorch}, \texttt{Pyro}, and \texttt{scikit-learn}.
Implementation details are deferred to Appendix~\ref{app:exp-details}.
\subsection{Sparse Regression on Diabetes}
\label{sec:diabetes}
We use the diabetes
dataset of \citet{efron2004least}: $n{=}442$ samples, $d{=}10$ features, and a
continuous disease-progression response.
The likelihood is Gaussian,
\begin{equation}
    f_\theta(y \mid x) = \mathcal{N}(y \mid \theta^\top x + \theta_0,\,\tau^2),
\end{equation}
with hierarchical priors $\pi(\theta_j) = \mathrm{Laplace}(0,b)$,
$\pi(b) = \mathrm{Gamma}(1,1)$, $\pi(\tau) = \mathcal{N}^+(0,c)$,
where $c\!\in\!\{1.0,\,0.02\}$ governs prior concentration.
Standardisation is applied after splitting to prevent leakage.
Both settings use MCMC with $T{=}8000$ samples; $c{=}0.02$ corresponds to a
misspecified prior \citep{jansen2013robust}.
Baselines include Split-CP, BCI, CB,
CQR, Snell-HPD, and BCP.
We fix $\alpha{=}0.2$ (target coverage: $1-\alpha = 80\%$), $\beta{=}0.2$ (confidence level: $1-\beta = 80\%$) and evaluate over 50 random splits
(52.5\% train / 17.5\% calibration / 30\% test).

\begin{table}[t]
\centering
\caption{Regression results on the Diabetes dataset (target coverage: $80\%$;
$\alpha{=}0.2$, $\beta{=}0.2$; 50 random splits).
\textcolor{red}{Red} marks serious under-coverage; \textbf{bold} marks BCP. BCI provide conditional coverage under the correct prior,
but offers no guarantee under model misspecification.}
\label{tab:regression_all}
\begin{tabular}{lcccccc}
\toprule
\textbf{Method} & \textbf{Guarantee} & $\boldsymbol{c}$
  & \textbf{Cov.\,(\%)} & \textbf{Width}
  & \textbf{Pred.\,(s)} & \textbf{Calib.\,(s)} \\
\midrule
Split-CP & marginal & $1.0$  & $78.7\pm5.5$ & $1.81\pm0.19$ & $<10^{-4}$ & $0.002$ \\
Split-CP & marginal & $0.02$ & $78.8\pm5.5$ & $1.81\pm0.18$ & $<10^{-4}$ & $0.002$ \\
\midrule
BCI & conditional & $1.0$  & $78.5\pm3.9$ & $1.77\pm0.05$ & $1.0\!\times\!10^{-3}$ & --- \\
BCI & none & $0.02$ & \textcolor{red}{$49.2\pm4.3$} & $1.00\pm0.02$
    & $1.0\!\times\!10^{-3}$ & --- \\
\midrule
CQR & marginal & $1.0$  & $80.9\pm6.0$ & $1.90\pm0.21$ & $1.0\!\times\!10^{-3}$ & $0.101$ \\
CQR & marginal & $0.02$ & $80.8\pm5.3$ & $1.90\pm0.20$ & $1.0\!\times\!10^{-3}$ & $0.086$ \\
\midrule
CB & marginal & $1.0$  & $79.2\pm4.8$ & $1.80\pm0.08$ & $3.0\!\times\!10^{-3}$ & $7.808$ \\
CB & marginal & $0.02$ & $79.2\pm4.4$ & $1.81\pm0.07$ & $3.0\!\times\!10^{-3}$ & $7.437$ \\
\midrule
Snell-HPD & PAC & $1.0$  & $79.3\pm5.5$ & $1.83\pm0.19$ & $1.0\!\times\!10^{-3}$ & $0.105$ \\
Snell-HPD & PAC & $0.02$ & $79.3\pm5.6$ & $1.82\pm0.18$ & $1.0\!\times\!10^{-3}$ & $0.099$ \\
\midrule
\textbf{BCP} & \textbf{PAC} & $\boldsymbol{1.0}$
  & $\boldsymbol{80.1\pm5.8}$ & $\boldsymbol{1.87\pm0.19}$
  & $\boldsymbol{2.0\!\times\!10^{-3}}$ & $\boldsymbol{1.152}$ \\
\textbf{BCP} & \textbf{PAC} & $\boldsymbol{0.02}$
  & $\boldsymbol{80.0\pm5.5}$ & $\boldsymbol{1.87\pm0.18}$
  & $\boldsymbol{2.0\!\times\!10^{-3}}$ & $\boldsymbol{1.225}$ \\
\bottomrule
\end{tabular}
\end{table}

Table~\ref{tab:regression_all} and Figure~\ref{fig:reg_comparison} present
the results. Three findings deserve attention.

\textit{Isolating efficiency optimisation.}
BCP and Snell-HPD share identical $L^+$ enforcement on the coverage constraint; the difference is that BCP explicitly minimises $\mathbb{E}_x[|C(x;\lambda)|]$
while Snell-HPD selects the smallest feasible $\lambda$ without a further
efficiency objective.
BCP ensures the target coverage ($80.1\%$ vs.\ $79.3\%$) at the cost of a slightly wider interval ($1.87$ vs.\ $1.83$) and higher calibration cost
($1.15$\,s vs.\ $0.10$\,s). The overhead arises from the BQ step.

\textit{Robustness to prior misspecification.} 
BCI collapses to $49.2\%$ coverage under $c{=}0.02$ despite producing the narrowest intervals. BCP restores near-nominal coverage ($80.0\%$) in the
same setting, demonstrating that conformal calibration via $L^+$ corrects for Bayesian miscalibration independently of prior quality. All marginal-coverage methods hold near $79$--$81\%$ across prior settings, confirming that
exchangeability drives their validity rather than the model.

\textit{Trade-off between split and full CP.} Although CB gives lower variability across run splits, it is a full CP method that requires much more computational cost than BCP ($7.81$ s vs. $1.15$ s).

\begin{figure}[h]
\centering
\includegraphics[width=\linewidth]{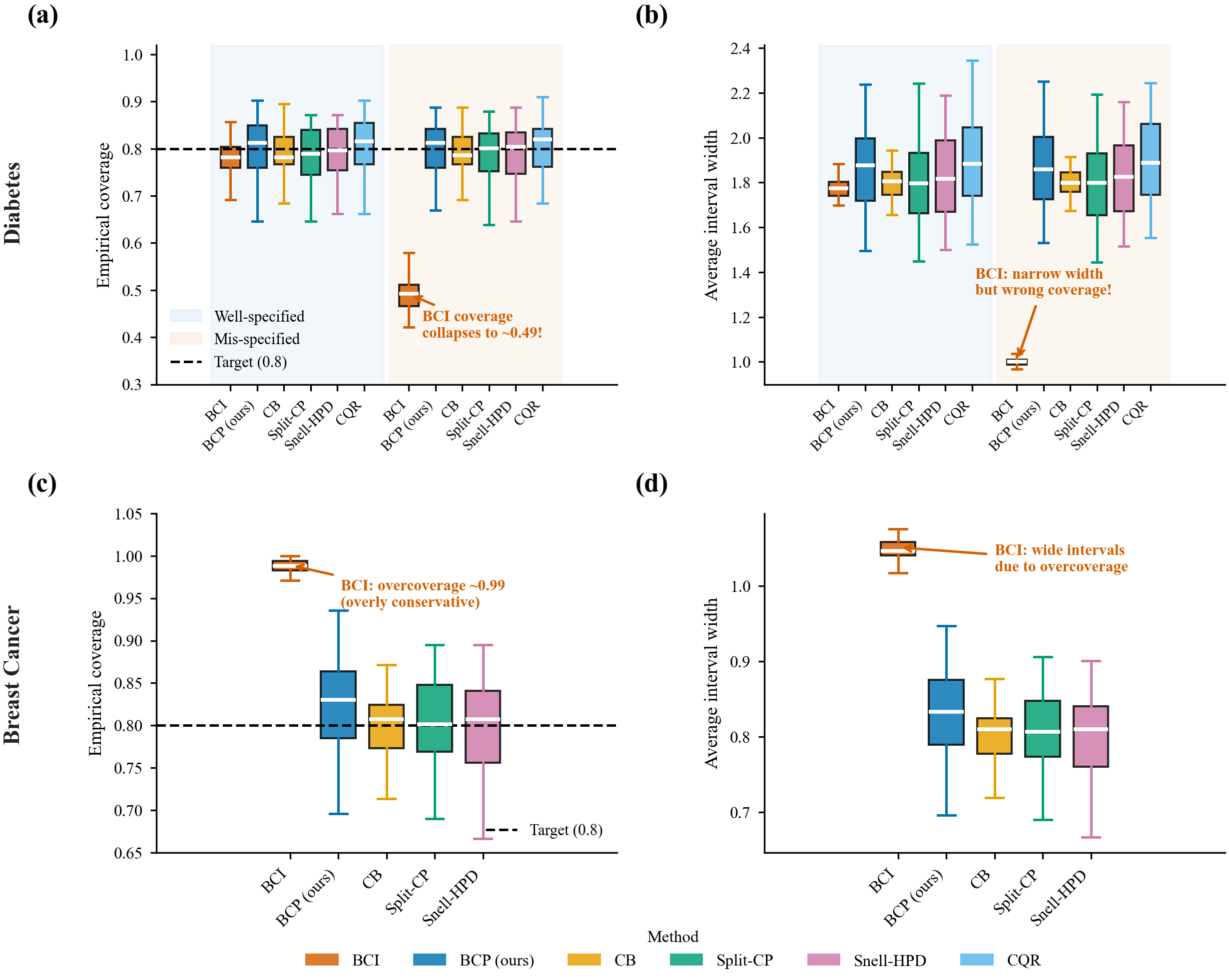}
\caption{Prediction interval comparisons on regression and classification.
(a)~Diabetes coverage: BCI severely under-covers under misspecification.
(b)~Diabetes interval width: BCI is narrow but invalid.
(c)~Breast Cancer coverage: BCI substantially overcovers.
(d)~Breast Cancer width: BCI's overcoverage yields excessively wide sets.
Dashed lines mark the $80\%$ target.}
\label{fig:reg_comparison}
\end{figure}

\subsection{Breast Cancer Classification}
\label{sec:classification}
We evaluate sparse binary classification on the Wisconsin Breast Cancer
dataset \citep{wolberg1990multisurface}: $569$ samples, $30$ features, binary
labels. The model is Bayesian logistic regression with standard Gaussian
priors. We fix $\alpha{=}0.2$, $\beta{=}0.2$ across 50 random splits
(52.5 / 17.5 / 30 partition) with $T{=}8000$ MCMC samples.
Average set size $|C(x)|\in\{0,1,2\}$; values below $1$ indicate some inputs
receive empty sets.

\begin{table}[h]
\centering
\caption{Classification results (50 random splits; $\alpha{=}0.2$,
$\beta{=}0.2$, confidence $1{-}\beta{=}80\%$).
\textcolor{red}{Red} marks substantial overcoverage.}
\label{tab:class_compare}
\begin{tabular}{lccccc}
\toprule
\textbf{Method} & \textbf{Guarantee}
  & \textbf{Cov.\,(\%)} & \textbf{Avg.\ set size}
  & \textbf{Pred.\,(s)} & \textbf{Calib.\,(s)} \\
\midrule
Split-CP  & marginal & $80.5\pm5.1$ & $0.809\pm0.052$ & $<10^{-4}$ & $0.005$ \\
BCI       & none     & \textcolor{red}{$98.9\pm0.8$} & $1.051\pm0.017$
          & $<10^{-4}$ & --- \\
CB        & marginal & $80.3\pm3.8$ & $0.806\pm0.037$ & $1.0\!\times\!10^{-3}$ & $9.432$ \\
Snell-HPD & PAC      & $79.9\pm5.6$ & $0.804\pm0.057$ & $<10^{-4}$ & $0.130$ \\
\textbf{BCP} & \textbf{PAC}
          & $\boldsymbol{82.5\pm5.4}$ & $\boldsymbol{0.829\pm0.056}$
          & $\boldsymbol{<10^{-4}}$   & $\boldsymbol{0.226}$ \\
\bottomrule
\end{tabular}
\end{table}

BCI overcovers at $98.9\%$ when the target coverage is set to $80\%$, confirming that Bayesian credible intervals without conformal calibration are unreliable.
Among PAC-based methods, BCP ($82.5\%$, set size $0.829$) again ensures target coverage compared to Snell-HPD ($79.9\%$, $0.804$) at calibration
overhead ($0.226$\,s vs.\ $0.130$\,s), attributable to the BQ step.
Split-CP and CB perform comparably; CB offers no efficiency advantage despite
its higher cost as a full CP method. BCP responds to both $\alpha$ and $\beta$, with tighter constraints yielding more conservative but valid prediction sets, as
detailed in Appendix~\ref{app:sensitivity}.

\subsection{Synthetic Multimodal Regression}
\label{sec:multimodal}
We test BCP in a setting where posterior predictive geometry is essential.
We construct a 1D bimodal regression task,
\begin{equation}
    Y \mid x,\theta
    \;\sim\;
    \tfrac{1}{2}\,\mathcal{N}(\theta_1 x,\,\sigma^2)
    \;+\;
    \tfrac{1}{2}\,\mathcal{N}(\theta_1 x+\delta,\,\sigma^2),
\end{equation}
with $\theta_1{=}1$, $\delta{=}4$, $\sigma{=}0.4$, using 200 training,
100 calibration, and 200 test points, with a Bayesian Gaussian mixture fit
via MCMC\@.
The posterior predictive is bimodal at every test input; the ideal prediction set is disjoint.

\paragraph{Role of BQ in multimodal settings.}
Although $g(\lambda)=\mathbb{E}_x[|C_\mathrm{HPD}(x;\lambda)|]$ is monotone
non-increasing in $\lambda$, the geometry of HPD sets changes abruptly when
$\lambda$ crosses the density level of the secondary mode peak: a connected
component disappears, inducing a sharp increase in estimator variance under
finite posterior samples.
A naive grid search can therefore straddle this discontinuity and return an
unstable $\lambda^*$; BQ places a GP prior over $\lambda\mapsto g(\lambda)$,
reducing estimation variance precisely where it is highest and stabilising
threshold selection near mode boundaries (see Appendix~\ref{app:hpd-bq}).
Figure~\ref{fig:multimodal_discontinuity} illustrates this effect.

\begin{figure}[t]
\centering
\includegraphics[width=0.8\linewidth]{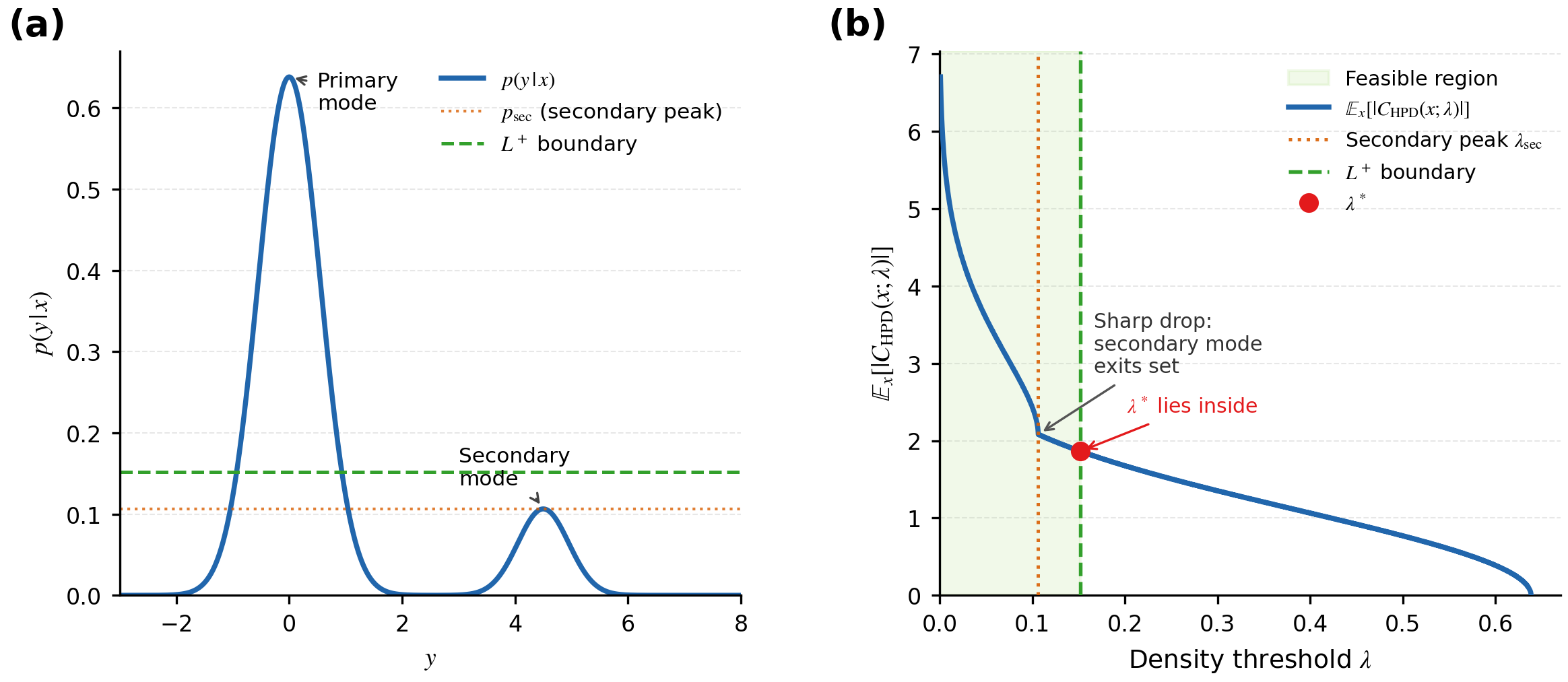}
\caption{Expected HPD set size as a function of $\lambda$.
The sharp drop occurs as $\lambda$ crosses the secondary-mode peak density,
causing an abrupt change in geometry in the HPD set.
The green dashed line marks the $L^+$ feasibility boundary; the red dot
marks $\lambda^*$.
This instability in the objective estimate motivates GP-BQ for stable
threshold selection.}
\label{fig:multimodal_discontinuity}
\end{figure}

\begin{table}[h]
\centering
\caption{Synthetic multimodal regression ($\alpha{=}0.2$, $\beta{=}0.2$;
200 test points).
``Disjoint'' indicates whether non-contiguous sets are possible.
``PAC'' denotes the proportion of trials in which empirical coverage exceeds
$1{-}\alpha$, with target level ${\ge}1{-}\beta$.}
\label{tab:multimodal}
\resizebox{\linewidth}{!}{%
\begin{tabular}{lcccccc}
\toprule
\textbf{Method} & \textbf{Guarantee} & \textbf{Disjoint}
  & \textbf{Size} & \textbf{Cov.\,(\%)} & \textbf{PAC} & \textbf{Time\,(s)} \\
\midrule
Split-CP  & marginal & no  & $4.82\pm0.14$ & $76.0\pm5.2$ & $0.32\pm0.07$ & $0.002$ \\
CQR       & marginal & no  & $4.67\pm0.11$ & $80.0\pm4.4$ & $0.44\pm0.07$ & $0.207$ \\
Snell-HPD & PAC      & yes & $2.06\pm0.14$ & $82.1\pm4.3$ & $0.78\pm0.06$ & $0.212$ \\
\textbf{BCP} & \textbf{PAC} & \textbf{yes}
  & $\mathbf{2.07\pm0.14}$ & $\mathbf{82.6\pm4.2}$
  & $\mathbf{0.80\pm0.06}$ & $\mathbf{0.221}$ \\
\bottomrule
\end{tabular}}
\end{table}

The dominant efficiency gain comes from HPD set geometry.
Interval-constrained methods (Split-CP and CQR) produce connected
prediction intervals that must span the low-density valley between modes,
yielding mean set sizes of $4.82$ and $4.67$ respectively.
HPD-based methods avoid this penalty by recovering the bimodal structure
directly, reducing the mean set size to approximately $2.07$; this reduction
is statistically significant relative to both interval baselines
($p < 0.0001$, Wilcoxon signed-rank test).

BCP and Snell-HPD achieve comparable set sizes ($2.07$ vs~$2.06$), not
significantly different. Snell-HPD was not originally designed for multimodal settings; we include Snell-HPD as a direct baseline since both methods use HPD set construction. The strong performance in this multimodal setting highlights that the efficiency gain is primarily driven by the shared geometric properties of HPD sets.
The distinction between the two methods lies in PAC calibration:
BCP achieves a PAC pass rate of $0.80$, exactly meeting the target
$1{-}\beta=0.80$, while Snell-HPD reaches $0.78$, falling slightly below
the target level.
Split-CP and CQR reach only $0.32$ and $0.44$, as marginal quantile
calibration does not account for the heavier tails of the bimodal predictive
distribution.

In terms of runtime, BCP ($0.221$\,s) is comparable to Snell-HPD
($0.212$\,s) and scales linearly with calibration set size
(Appendix~\ref{app:scalability}).

\subsection{High-Dimensional Classification on ImageNet-A}
\label{sec:imagenet}
We evaluate BCP on ImageNet-A \citep{hendrycks2021nae} as a high-dimensional sanity check and scalability experiment, rather than as a setting where BCP is expected to outperform Snell-HPD. ImageNet-A is a natural distribution-shift benchmark with $200$ different classes. A frozen ImageNet-pretrained ResNet-50 serves as the feature extractor, with a Monte Carlo Dropout head ($T{=}30$, dropout rate $0.3$). We use 2{,}000 training, 2{,}000 calibration, and 3{,}000 test samples over five random splits, with $(\alpha,\beta){=}(0.2,0.2)$.

\begin{table}[h]
\centering
\caption{ImageNet-A results under distribution shift
($\alpha{=}0.2$, $\beta{=}0.2$, $T{=}30$, 5 splits). All PAC-based methods use AOI scores and $L^+$ enforcement. Calib.\ Time reports calibration wall-clock time per split (AOI score pre-computation
excluded).}
\label{tab:imagenet_stats}
\begin{tabular}{lccccc}
\toprule
\textbf{Method} & \textbf{Guarantee}
  & \textbf{Coverage\,(\%)}
  & \textbf{Mean $|C(x)|$} & \textbf{$\mathrm{P}_{95}$ $|C(x)|$}
  & \textbf{Calib.\,Time\,(s)} \\
\midrule
Split-CP  & marginal & $77.78\pm0.42$ & $122.7\pm5.1$ & $138$ & ${<}0.01$ \\
CB        & marginal & $79.34\pm0.41$ & $123.1\pm5.0$ & $138$ & ${<}0.01$ \\
Snell-HPD & PAC      & $80.00\pm0.12$ & $40.3\pm2.8$  & $52$  & $3.23$ \\
BCP       & PAC      & $80.00\pm0.12$ & $40.3\pm2.8$  & $52$  & $9.65$ \\
\bottomrule
\end{tabular}
\end{table}

Under distribution shift, all methods maintain valid coverage, while PAC-based methods achieve substantially smaller prediction sets. Both BCP and Snell-HPD yield mean $|C|{=}40.3$ ($\mathrm{P}_{95}{=}52$) versus above $122$ ($\mathrm{P}_{95}{=}138$) for Split-CP and CB; this efficiency gain is attributable to the AOI score function rather than to the PAC constraint itself, as confirmed by an MSP-score ablation (Appendix~\ref{app:msp-ablation}). These results confirm that the BCP pipeline remains feasible in a high-dimensional classification setting.

BCP and Snell-HPD produce identical results in this setting, as expected. In multiclass classification, the expected set size
$g(\lambda) = \mathbb{E}[|C(X;\lambda)|]$ is monotonically increasing in $\lambda$, since a larger threshold admits more classes into the prediction set. As a result, the optimal solution is the smallest feasible $\lambda$ satisfying the $L^+$ constraint, which coincides with the threshold selected by Snell-HPD. ImageNet-A should therefore be read as a scalability and consistency check, not as a setting where BCP is expected to yield additional efficiency gains.

In terms of computation, the dominant cost is AOI score construction, which requires $T{=}30$ stochastic forward passes through the MC Dropout head and takes ${\sim}9$\,s per split. Once the $N_{\mathrm{cal}}{\times}K$ score matrix is cached, evaluating $g(\lambda)$ at any threshold reduces to a single $O(N_{\mathrm{cal}} \cdot K)$ matrix comparison, making threshold selection negligible for all methods.

\begin{figure}[h]
\centering
\includegraphics[width=\linewidth]{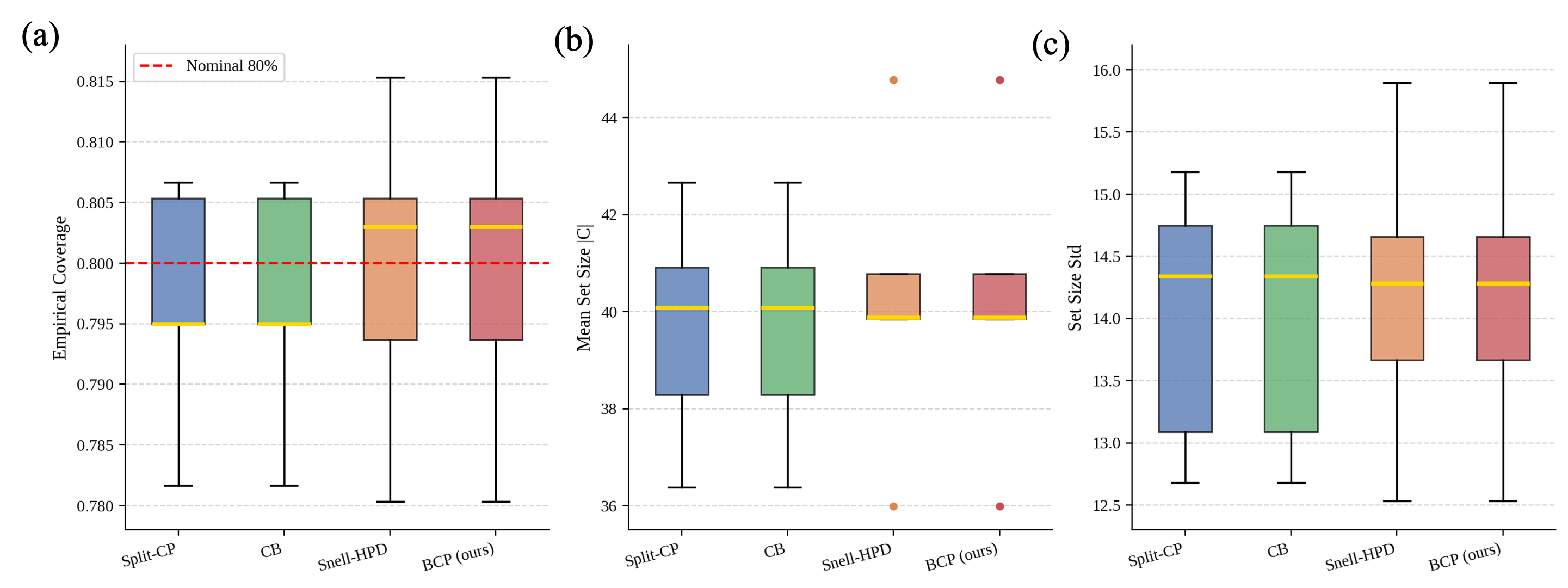}
\caption{ImageNet-A results over five random splits.
(a)~Empirical coverage; red dashed at $80\%$.
(b)~Mean set size $|C|$.
(c)~Set size standard deviation across test examples.}
\label{fig:imagenet-results}
\end{figure}

\paragraph{Summary.}
BCP consistently achieves reliable coverage while adapting to the structure of the prediction set, improving efficiency across diverse settings. Under model misspecification, conformal calibration via $L^+$ restores nominal coverage where Bayesian credible intervals fail, while maintaining robustness through exchangeability. In the classification setting, where Bayesian credible intervals severely overcover ($98.9\%$), BCP provides explicit efficiency-validity control via $(\alpha,\beta)$, achieving near-target coverage with only minor computational overhead (${<}0.3$\,s calibration across all small-scale experiments).

The largest gains arise in multimodal settings, where BCP exploits disjoint
predictive structure to substantially reduce the prediction set size while exactly
meeting PAC guarantees. In contrast, under unimodal classification tasks such
as ImageNet-A, where the expected set size is monotone in the threshold,
BCP recovers the same solution as existing PAC-based methods, indicating that
optimisation yields benefits primarily when prediction sets exhibit complex
structure. Overall, BCP combines statistical validity with adaptive efficiency, delivering robust performance across misspecification, distribution shift, and structural
complexity.

\section{Conclusion}
\label{sec:conclusion}
We introduced Bayesian Conformal Prediction (BCP), a PAC-controlled framework
that uses Bayesian posterior predictive structure to construct prediction sets
with adaptive geometry. Validity rests on exchangeability and the $L^+$ risk constraint, while posterior predictive scores shape the set geometry and Bayesian quadrature stabilises objective estimation near mode-boundary density
levels.

Our results show that BCP is most useful when predictive uncertainty is
multimodal. In threshold-based settings with nested prediction sets, such as unimodal regression or standard multiclass classification, BCP recovers the smallest feasible threshold and matches existing PAC-based methods. In multimodal settings,
HPD geometry yields disjoint prediction sets that avoid low-density
inter-modal regions, leading to substantial efficiency gains.

\textbf{Limitations and broader impact.}
BCP inherits the computational cost of posterior inference, requiring $T$
posterior samples and AOI reweighting per calibration point, and scaling to very large datasets or high-dimensional label spaces remains open. Its guarantees also rely on
exchangeability and may degrade under severe distribution shift. As a
general-purpose uncertainty quantification tool, BCP can reduce overconfident
predictions in high-stakes applications such as medical diagnosis and
autonomous systems, but its assumptions must be checked in practice.
\section*{Acknowledgments}
This work was supported by the Engineering and Physical Sciences Research Council (EPSRC) under grant[EP/Y030826/1]. 
The author thanks the supervisory team, Prof Samuel Kaski and Dr Michele Caprio.
\bibliography{reference}

@book{vovk2005algorithmic,
  title={Algorithmic Learning in a Random World},
  author={Vovk, Vladimir and Gammerman, Alex and Shafer, Glenn},
  year={2005},
  publisher={Springer}
}

@article{shafer2008tutorial,
  title={A Tutorial on Conformal Prediction},
  author={Shafer, Glenn and Vovk, Vladimir},
  journal={Journal of Machine Learning Research},
  volume={9},
  pages={371--421},
  year={2008},
  month={Mar}
}

@article{angelopoulos2021gentle,
  author       = {Anastasios N. Angelopoulos and
                  Stephen Bates},
  title        = {A Gentle Introduction to Conformal Prediction and Distribution-Free
                  Uncertainty Quantification},
  journal      = {CoRR},
  volume       = {abs/2107.07511},
  year         = {2021},
  url          = {https://arxiv.org/abs/2107.07511},
  eprinttype   = {arXiv},
  eprint       = {2107.07511},
  bibsource    = {dblp computer science bibliography, https://dblp.org}
}

@inproceedings{
fong2021conformalbayesiancomputation,
title={Conformal Bayesian Computation},
author={Edwin Fong and Christopher C. Holmes},
booktitle={Advances in Neural Information Processing Systems},
editor={A. Beygelzimer and Y. Dauphin and P. Liang and J. Wortman Vaughan},
year={2021},
url={https://openreview.net/forum?id=e95xWqO7ehi}
}

@article{efron2004least,
  title={Least angle regression},
  author={Efron, Bradley and Hastie, Trevor and Johnstone, Iain and Tibshirani, Robert},
  journal={Annals of Statistics},
  volume={32},
  number={2},
  pages={407--499},
  year={2004},
  publisher={Institute of Mathematical Statistics}
}

@mastersthesis{jansen2013robust,
  title={Robust Bayesian inference under model misspecification},
  author={Jansen, Laurens},
  school={Leiden University},
  year={2013},
  note={Chapter 4.5}
}

@article{wolberg1990multisurface,
  title={Multisurface method of pattern separation for medical diagnosis applied to breast cytology},
  author={Wolberg, William H. and Mangasarian, Olvi L.},
  journal={Proceedings of the National Academy of Sciences},
  volume={87},
  number={23},
  pages={9193--9196},
  year={1990},
  publisher={National Academy of Sciences}
}

@book{bernardo2009bayesian,
  title={Bayesian theory},
  author={Bernardo, Jose M and Smith, Adrian FM},
  volume={405},
  year={2009},
  publisher={John Wiley \& Sons}
}

@article{
caprio2025conformalized,
title={Conformalized Credal Regions for Classification with Ambiguous Ground Truth},
author={Michele Caprio and David Stutz and Shuo Li and Arnaud Doucet},
journal={Transactions on Machine Learning Research},
issn={2835-8856},
year={2025},
url={https://openreview.net/forum?id=L7sQ8CW2FY},
note={}
}

@article{stutz2022learning,
  author       = {David Stutz and
                  Krishnamurthy Dvijotham and
                  Ali Taylan Cemgil and
                  Arnaud Doucet},
  title        = {Learning Optimal Conformal Classifiers},
  journal      = {CoRR},
  volume       = {abs/2110.09192},
  year         = {2021},
  url          = {https://arxiv.org/abs/2110.09192},
  eprinttype   = {arXiv},
  eprint       = {2110.09192},
  bibsource    = {dblp computer science bibliography, https://dblp.org}
}

@article{
caprio2024credal,
title={Credal Bayesian Deep Learning},
author={Michele Caprio and Souradeep Dutta and Kuk Jin Jang and Vivian Lin and Radoslav Ivanov and Oleg Sokolsky and Insup Lee},
journal={Transactions on Machine Learning Research},
issn={2835-8856},
year={2024},
url={https://openreview.net/forum?id=4NHF9AC5ui},
note={}
}

@inproceedings{gal2016dropout,
  title={Dropout as a bayesian approximation: Representing model uncertainty in deep learning},
  author={Gal, Yarin and Ghahramani, Zoubin},
  booktitle={international conference on machine learning},
  pages={1050--1059},
  year={2016},
  organization={PMLR}
}

@article{tyralis2024review,
  title={A review of predictive uncertainty estimation with machine learning},
  author={Tyralis, Hristos and Papacharalampous, Georgia},
  journal={Artificial Intelligence Review},
  volume={57},
  number={4},
  pages={94},
  year={2024},
  publisher={Springer}
}

@inproceedings{
snell2025conformal,
title={Conformal Prediction as Bayesian Quadrature},
author={Jake C. Snell and Thomas L. Griffiths},
booktitle={Forty-second International Conference on Machine Learning},
year={2025},
url={https://openreview.net/forum?id=PNmkjIzHB7}
}

@article{barber2023conformal,
  title={Conformal prediction beyond exchangeability},
  author={Barber, Rina Foygel and Candes, Emmanuel J and Ramdas, Aaditya and Tibshirani, Ryan J},
  journal={The Annals of Statistics},
  volume={51},
  number={2},
  pages={816--845},
  year={2023},
  publisher={Institute of Mathematical Statistics}
}

@misc{bariletto2025conformalized,
      title={Conformalized Bayesian Inference, with Applications to Random Partition Models}, 
      author={Nicola Bariletto and Nhat Ho and Alessandro Rinaldo},
      year={2025},
      eprint={2511.05746},
      archivePrefix={arXiv},
      primaryClass={stat.ME},
      url={https://arxiv.org/abs/2511.05746}, 
}

@article{ohagan1991bayes,
  title   = {Bayes--Hermite Quadrature},
  author  = {O'Hagan, Anthony},
  journal = {Journal of Statistical Planning and Inference},
  volume  = {29},
  number  = {3},
  pages   = {245--260},
  year    = {1991}
}

@article{lei2018distributionfree,
author = {Jing Lei and Max G’Sell and Alessandro Rinaldo and Ryan J. Tibshirani and Larry Wasserman},
title = {Distribution-Free Predictive Inference for Regression},
journal = {Journal of the American Statistical Association},
volume = {113},
number = {523},
pages = {1094--1111},
year = {2018},
publisher = {Taylor \& Francis},
doi = {10.1080/01621459.2017.1307116},

URL = { 
    
        https://doi.org/10.1080/01621459.2017.1307116
    
    
},
eprint = { 
    
        https://doi.org/10.1080/01621459.2017.1307116
    

}

}

@misc{angelopoulos2025conformalriskcontrol,
      title={Conformal Risk Control}, 
      author={Anastasios N. Angelopoulos and Stephen Bates and Adam Fisch and Lihua Lei and Tal Schuster},
      year={2025},
      eprint={2208.02814},
      archivePrefix={arXiv},
      primaryClass={stat.ME},
      url={https://arxiv.org/abs/2208.02814}, 
}

@misc{bellotti2021optimized,
      title={Optimized conformal classification using gradient descent approximation}, 
      author={Anthony Bellotti},
      year={2021},
      eprint={2105.11255},
      archivePrefix={arXiv},
      primaryClass={cs.LG},
      url={https://arxiv.org/abs/2105.11255}, 
}

@article{sadinle2018least,
   title={Least Ambiguous Set-Valued Classifiers With Bounded Error Levels},
   volume={114},
   ISSN={1537-274X},
   url={http://dx.doi.org/10.1080/01621459.2017.1395341},
   DOI={10.1080/01621459.2017.1395341},
   number={525},
   journal={Journal of the American Statistical Association},
   publisher={Informa UK Limited},
   author={Sadinle, Mauricio and Lei, Jing and Wasserman, Larry},
   year={2018},
   month=June, pages={223–234} }

@inbook{romano2019conformalized,
author = {Romano, Yaniv and Patterson, Evan and Cand\`{e}s, Emmanuel J.},
title = {Conformalized quantile regression},
year = {2019},
publisher = {Curran Associates Inc.},
address = {Red Hook, NY, USA},
booktitle = {Proceedings of the 33rd International Conference on Neural Information Processing Systems},
articleno = {318},
numpages = {11}
}

@article{hendrycks2021nae,
  title={Natural Adversarial Examples},
  author={Dan Hendrycks and Kevin Zhao and Steven Basart and Jacob Steinhardt and Dawn Song},
  journal={CVPR},
  year={2021},
  url={https://arxiv.org/abs/1907.07174}, 
}

@book{gelman2013bayesian,
  title     = {Bayesian Data Analysis},
  author    = {Gelman, Andrew and Carlin, John B. and Stern, Hal S. and Dunson, David B. and Vehtari, Aki and Rubin, Donald B.},
  edition   = {3},
  year      = {2013},
  publisher = {Chapman and Hall/CRC},
  doi       = {10.1201/b16018}
}

\newpage
\appendix
\appendix
\section{Theoretical Background and Derivations}
\label{app:theory}

\subsection{Efficiency of Conformal Predictors.}
Throughout this appendix, prediction regions are parameterised by a scalar threshold $\lambda \in \mathbb{R}$, and we write $\mathcal{C}(x;\lambda)$ for the induced conformal prediction set. Let $\mathcal{C}(x;\lambda)$ denote a prediction region (or confidence predictor) that maps
an input $x$ to a subset of possible outputs.

A classical result by \citet{vovk2005algorithmic} states that for any valid predictor $\mathcal{C}(x;\lambda)$ defined on a standard Borel space $\mathcal{Z}$, there exists a conformal predictor $\mathcal{C}'(x;\lambda)$ such that
\begin{equation}
|\mathcal{C}'(x;\lambda)| \leq |\mathcal{C}(x;\lambda)|,
\quad \forall x \in \mathcal{X},
\end{equation}
while preserving the same marginal coverage guarantee,
\begin{equation}
\mathbb{P}\!\left[Y \in \mathcal{C}'(X;\lambda)\right] \ge 1 - \alpha,
\end{equation}
where $|\mathcal{C}(x;\lambda)|$ denotes the size (e.g., interval length or set cardinality) of the prediction region and $\alpha$ is the user-specified miscoverage level.
This result implies that, among all predictors achieving a given coverage level, there exists a conformal predictor whose prediction regions are almost surely no larger than those of any other predictor.

\subsection{Expectation-based Formulation.}
Both coverage and efficiency can be expressed in terms of expectation.
The marginal coverage can be written as
\begin{equation}
\mathbb{P}\!\left[Y \in \mathcal{C}(X;\lambda)\right]
=
\mathbb{E}_{X}
\!\left[
\mathbb{E}_{Y}
\!\left[
\mathbb{I}\{Y \in \mathcal{C}(X;\lambda)\} \mid X
\right]
\right],
\end{equation}
while efficiency is measured by the expected prediction set size
\begin{equation}
\mathrm{Eff}(\lambda) := \mathbb{E}\!\left[|\mathcal{C}(X;\lambda)|\right].
\end{equation}
This leads to the following constrained optimisation problem:
\begin{equation}
\min_{\lambda} \; \mathbb{E}\!\left[|\mathcal{C}(X;\lambda)|\right]
\quad \text{s.t.} \quad
\mathbb{P}\!\left[Y \in \mathcal{C}(X;\lambda)\right] \ge 1 - \alpha.
\end{equation}

This formulation provides the decision-theoretic foundation for our method.
In the main text, this marginal coverage constraint is further strengthened to a PAC-style guarantee via conformal risk control, and the expectation over $X$ is estimated using Bayesian quadrature.

\section{Theoretical Foundations of the Risk Formulation}
\label{app:risk-theory}

\subsection{Split Conformal Prediction as Risk Control}

Let $z = (x,y)$ denote an input--output pair with $x \in \mathcal{X}$ and $y \in \mathcal{Y}$, and let $\mathcal{D} = \{(X_i,Y_i)\}_{i=1}^n$ denote the observed data.
We consider prediction regions parameterised by a scalar threshold $\lambda$, denoted by $\mathcal{C}(x;\lambda)$.

We define the miscoverage loss
\begin{equation}
L(y,\mathcal{C}(x;\lambda)) = \mathbb{I}\{ y \notin C(x; \lambda) \},
\end{equation}
and the corresponding \emph{data-dependent miscoverage risk}
\begin{equation}
R_{\mathcal{D}}(\lambda)
=
\mathbb{P}_{(X,Y)}\!\left(Y \notin \mathcal{C}(X;\lambda)\right),
\label{eq:data-risk}
\end{equation}
where the probability is taken with respect to a fresh test pair $(X,Y)$ drawn exchangeably with the calibration data, conditional on the training data.

In split conformal prediction, the non-conformity score function is fixed with respect to the calibration labels, and the threshold $\lambda$ is selected using the calibration data only.
Rather than enforcing marginal coverage deterministically, we adopt the conformal risk control (CRC) framework \citep{angelopoulos2025conformalriskcontrol, snell2025conformal}, which guarantees that
\begin{equation}
\mathbb{P}_{\mathcal D}\!\left(
R_{\mathcal D}(\lambda)\le \alpha
\right)\ge 1-\beta,
\label{eq:crc-app}
\end{equation}
for user-specified risk level $\alpha$ and confidence parameter $\beta$.
This PAC-style guarantee forms the basis of the coverage control used throughout the main text.

\subsection{Bayesian Non-conformity Scores under Split Conformal Prediction}
\label{app: Valid Bayesian AOI}
Classical non-conformity scores often ignore uncertainty in model parameters.
In this work, we define a Bayesian predictive score based on a posterior trained exclusively on the training data:
\begin{equation}
s(x,y)
=
-\log p(y \mid x, \mathcal{D}_{\mathrm{tr}})
=
-\log \int f_\theta(y \mid x)\, \pi(\theta \mid \mathcal{D}_{\mathrm{tr}})\,
\mathrm{d}\theta,
\label{eq:bayes-score-split}
\end{equation}
where $\mathcal{D}_{\mathrm{tr}}$ denotes the training dataset and
$\pi(\theta \mid \mathcal{D}_{\mathrm{tr}})$ is the corresponding posterior.
Importantly, this score function is fixed with respect to the calibration and test labels, satisfying the fundamental requirement of split conformal
prediction.

\begin{proposition}[Validity of Bayesian Scores under Split CP]
\label{prop:bayes-split-validity}
Assume that the calibration and test samples are exchangeable conditional on the training data, and that the score function~\eqref{eq:bayes-score-split}
is fixed with respect to the calibration labels.
Let $\lambda$ be selected using conformal risk control applied to the resulting calibration losses.
Then the resulting conformal predictor satisfies the PAC-style guarantee
\eqref{eq:crc-app}.
\end{proposition}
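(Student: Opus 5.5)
The plan is to condition on the training data and reduce the statement to the generic conformal risk control guarantee of \citet{angelopoulos2025conformalriskcontrol, snell2025conformal}. That guarantee needs only two ingredients: exchangeability of the losses $\ell_1(\lambda),\dots,\ell_n(\lambda),\ell_{n+1}(\lambda)$, and boundedness and monotonicity of each loss in $\lambda$.

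\textbf{Step 1: the score is a fixed function.} Condition on $\mathcal{D}_{\mathrm{tr}}$. The posterior $\pi(\theta\mid\mathcal{D}_{\mathrm{tr}})$ depends only on the training data, and the AOI weights \eqref{eq:AOI-weight} depend only on the query pair $(x,y)$ and the posterior samples. Hence $s(\cdot,\cdot)$ in \eqref{eq:bayes-score-split} is a fixed measurable map. The posterior samples are also drawn from training data only, so we condition on them as well.

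\textbf{Step 2: the losses are exchangeable.} Applying the same map $z\mapsto \mathbb{I}\{s(z)>\lambda\}$ to an exchangeable sequence gives an exchangeable sequence. So, for every $\lambda$, the vector $(\ell_1(\lambda),\dots,\ell_{n+1}(\lambda))$, with $\ell_{n+1}$ evaluated at the test pair, is exchangeable conditional on $\mathcal{D}_{\mathrm{tr}}$.

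\textbf{Step 3: the losses are bounded and monotone.} Each loss takes values in $[0,B]$ with $B=1$. For threshold sets, $\mathbb{I}\{s(X,Y)>\lambda\}$ is non-increasing in $\lambda$. This is the same nestedness used in Proposition~\ref{prop:monotone}. For the HPD parametrisation the direction reverses, and we reparametrise so that the loss is again non-increasing.

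\textbf{Step 4: apply the CRC/$L^+$ guarantee.} With these hypotheses, the CRC theorem states that the threshold $\hat\lambda$ in \eqref{eq:threshold}, chosen as the boundary of $\{\lambda:\mathbb{P}_U[L^+(\lambda)\le\alpha]\ge 1-\beta\}$, satisfies $\mathbb{P}_{\mathcal D}(R_{\mathcal D}(\hat\lambda)\le\alpha)\ge 1-\beta$ conditional on $\mathcal{D}_{\mathrm{tr}}$. The mechanism is that $L^+$, built from Dirichlet weights and the virtual loss $B$, stochastically dominates the true risk. This follows the order-statistic and Dirichlet representation of the uniform spacings in \citet{snell2025conformal}. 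Monotonicity ensures that the feasible set is an interval, so choosing its boundary preserves the bound. Integrating over $\mathcal{D}_{\mathrm{tr}}$ then removes the conditioning and yields \eqref{eq:crc-app}.

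\textbf{Main obstacle.} The hardest step is Step 4, namely justifying that the data-dependent choice $\hat\lambda$ inherits a guarantee that is first established pointwise in $\lambda$. I would handle it by the standard monotonicity argument. Let $\lambda^\dagger$ be the oracle threshold at which the true risk first drops to $\alpha$. The failure event $\{R_{\mathcal D}(\hat\lambda)>\alpha\}$ is contained in $\{\hat\lambda<\lambda^\dagger\}$, and this event in turn implies that the $L^+$ constraint passes at the fixed point $\lambda^\dagger$. Bounding that single-$\lambda$ event by $\beta$ uses exchangeability at the fixed $\lambda^\dagger$ only.

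I would also check that the AOI Monte Carlo approximation does not break validity. It does not: the approximation is computed from training quantities only, so it remains a fixed score. Its accuracy affects efficiency, not coverage.
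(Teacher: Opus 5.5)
Your route is the same as the paper's. You condition on $\mathcal{D}_{\mathrm{tr}}$ (and, as you rightly add, on the posterior draws), note that the score and hence each loss is a fixed map, and deduce exchangeability of $(\ell_1(\lambda),\dots,\ell_{n+1}(\lambda))$ for every $\lambda$. You then invoke the CRC/$L^+$ guarantee conditionally and integrate out the training data. Your oracle-threshold reduction spells out what the paper leaves inside the citation, and it is the right argument.

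The step that fails as you justify it is the claim that the single-$\lambda$ bound ``uses exchangeability only''. Exchangeability is what the expectation bound of \citet{angelopoulos2025conformalriskcontrol} needs. A PAC statement about $R(\hat\lambda)$, with $R(\lambda)=\mathbb{P}(s(X,Y)>\lambda\mid\mathcal{D}_{\mathrm{tr}})$ as in the paper's proof, needs more: the calibration losses at a fixed $\lambda$ must be i.i.d.\ given $\mathcal{D}_{\mathrm{tr}}$. Under that assumption the miscoverage count $K$ is $\mathrm{Binomial}(n,R(\lambda))$ and $L^+(\lambda)$ is $\mathrm{Beta}(K+1,n-K)$ under $U$. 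Its $(1-\beta)$-quantile is then the Clopper--Pearson upper bound, so passing at a $\lambda$ with $R(\lambda)>\alpha$ has probability at most $\beta$. The uniform-spacings representation of \citet{snell2025conformal} that you cite presumes i.i.d.\ draws for the same reason.

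Under exchangeability alone the conclusion can fail. Given $\mathcal{D}_{\mathrm{tr}}$, let a fair coin pick one of two laws and draw the $n+1$ pairs i.i.d.\ from it, with scores in $[0,1]$ under the first law and in $[10,11]$ under the second. On the first event all calibration losses vanish at $\lambda=1$, so $L^+(1)=U_{n+1}$ passes once $(1-\alpha)^n\le\beta$. Hence $\hat\lambda\le 1$ and $R(\hat\lambda)\ge 1/2$. The failure probability is therefore at least $1/2$, which exceeds $\beta$ whenever $\alpha,\beta<1/2$. The paper's proof gets past this only by asserting that the calibration losses are i.i.d.\ given $\mathcal{D}_{\mathrm{tr}}$; its parenthetical ``equivalently, exchangeable'' is not correct. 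You should state the i.i.d.\ hypothesis explicitly rather than claim that exchangeability suffices.

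A smaller repair is needed in your reduction. $R$ is a survival function and hence only right-continuous, so at $\lambda^\dagger=\inf\{\lambda:R(\lambda)\le\alpha\}$ you may have $R(\lambda^\dagger)<\alpha$ strictly. In that case passing at $\lambda^\dagger$ is not a rare event. Instead, apply the single-point bound at $\lambda^\dagger-\epsilon$, where $R>\alpha$, to get $\mathbb{P}(\hat\lambda\le\lambda^\dagger-\epsilon)\le\beta$, and let $\epsilon\downarrow 0$ along these increasing events. Because the feasible set is an upper half-line and $R$ is non-increasing, the same argument covers any feasible selection, not just the boundary. This includes the BQ-based argmin in Algorithm~\ref{alg:threshold-full}.
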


\paragraph{Proof.}
Let the training data be $\mathcal{D}_{\mathrm{tr}}$, the calibration data be
$\mathcal{D}_{\mathrm{cal}}=\{Z_i\}_{i=1}^n$ with $Z_i=(X_i,Y_i)$, and the test point be
$Z_{n+1}=(X_{n+1},Y_{n+1})$.
By assumption, conditional on $\mathcal{D}_{\mathrm{tr}}$, the random variables
$Z_1,\dots,Z_n,Z_{n+1}$ are exchangeable.

Define the Bayesian score function
$s(x,y) = -\log p(y\mid x,\mathcal{D}_{\mathrm{tr}})$ as in~\eqref{eq:bayes-score-split}.
Since $\pi(\theta\mid \mathcal{D}_{\mathrm{tr}})$ depends only on $\mathcal{D}_{\mathrm{tr}}$,
the score function $s$ is $\sigma(\mathcal{D}_{\mathrm{tr}})$-measurable and does not depend
on any calibration labels $\{Y_i\}_{i=1}^n$.
Hence, for any threshold $\lambda$, the (miscoverage) loss on an example $z=(x,y)$,
\[
\ell(z,\lambda) := \mathbb{I}\{ y \notin C(x;\lambda)\},
\quad C(x;\lambda)=\{y'\in\mathcal{Y}: s(x,y')\le \lambda\},
\]
is also $\sigma(\mathcal{D}_{\mathrm{tr}})$-measurable as a function of $z$ and $\lambda$.

Therefore, conditional on $\mathcal{D}_{\mathrm{tr}}$, the random variables
$\ell(Z_1,\lambda),\dots,\ell(Z_n,\lambda),\ell(Z_{n+1},\lambda)$ are exchangeable for each
fixed $\lambda$.
In particular, the calibration losses $\{\ell(Z_i,\lambda)\}_{i=1}^n$ are i.i.d. conditional
on $\mathcal{D}_{\mathrm{tr}}$ (equivalently, exchangeable with a common conditional law).

Let $\widehat{\lambda}=\widehat{\lambda}(\mathcal{D}_{\mathrm{cal}})$ be the threshold selected by conformal risk control (CRC) applied to the calibration losses induced by $s$.
By the CRC guarantee of \citet{angelopoulos2025conformalriskcontrol} (applied conditional on $\mathcal{D}_{\mathrm{tr}}$),
we have
\[
\mathbb{P}_{\mathcal{D}_{\mathrm{cal}}}\!\left(
R(\widehat{\lambda}) \le \alpha \;\middle|\; \mathcal{D}_{\mathrm{tr}}
\right) \ge 1-\beta,
\]
where $R(\lambda)=\mathbb{P}(Y\notin C(X;\lambda)\mid \mathcal{D}_{\mathrm{tr}})$ denotes the (test-time) miscoverage risk under the exchangeable draw of $(X,Y)$.
Removing the conditioning on $\mathcal{D}_{\mathrm{tr}}$ yields the PAC-style statement
\eqref{eq:crc-app}.
\hfill$\square$

\subsection{Bayesian Risk as an Optimisation Objective}

While coverage is enforced via conformal risk control, the Bayesian structure is used to guide the selection of the threshold $\lambda$ through an efficiency-oriented objective.
We define the Bayesian decision risk
\begin{equation}
\mathcal{R}(\lambda)
=
\mathbb{E}_{\theta \sim \pi(\cdot \mid \mathcal{D}_{\mathrm{tr}})}
\mathbb{E}_{X,Y \sim p(\cdot \mid \theta)}
\!\left[
L(Y,\mathcal{C}(X;\lambda))
\right],
\label{eq:bayes-risk}
\end{equation}
which averages miscoverage over posterior uncertainty and the test input
distribution.

Minimising the expected prediction set size
\begin{equation}
\mathbb{E}_X\!\left[|\mathcal{C}(X;\lambda)|\right]
\end{equation}
under the constraint~\eqref{eq:crc-app} yields the decision-theoretic
optimisation problem studied in the main text.
In practice, the expectation of $X$ is approximated using Bayesian quadrature, while the Bayesian risk~\eqref{eq:bayes-risk} serves solely as an optimisation criterion, and does not provide coverage guarantees.

\section{Bayesian Risk Estimation and AOI Implementation}
\label{app:method-theory}





\subsection{Exchangeability and Validity}
\label{Appendix: validity}

Because the posterior $\pi(\theta\mid\mathcal{D}_{\mathrm{tr}})$ is fixed, the induced score function $s(x,y)$ in~\eqref{eq:score} is applied identically to all calibration and test points.
Assuming exchangeability between the calibration and test samples conditional on the training data, the calibration losses are i.i.d. given $\mathcal{D}_{\mathrm{tr}}$.

Consequently, the conformal risk control guarantees derived in Section~\ref{app:risk-theory} apply directly, and the resulting prediction sets satisfy the PAC-style coverage guarantee
\begin{equation}
\mathbb{P}_{\mathcal{D}}\!\left(
\mathbb{P}(Y\notin\mathcal{C}(X;\lambda))\le\alpha
\right)\ge 1-\beta.
\end{equation}

\subsection{Add-one-in (AOI)\label{Appendix:AOI} Importance Sampling Approximation} AOI approximates posterior predictive densities via reweighting of posterior draws:
\begin{equation}
    \pi(\theta \mid \mathcal{D}_{\mathrm{tr}} \cup \{(x_{n+1},y)\})
\propto 
\pi(\theta \mid \mathcal{D}_{\mathrm{tr}})\, f_\theta(y \mid x_{n+1}),
\quad
\tilde{w}^{(t)} = \frac{f_{\theta^{(t)}}(y \mid x_{n+1})}
{\sum_{t'} f_{\theta^{(t')}}(y \mid x_{n+1})}.
\end{equation}
The posterior predictive under the augmented dataset becomes
\begin{equation}
    \widehat{p}(Y_i \mid X_i, \mathcal{D}_{\mathrm{tr}})
    = \sum_{t=1}^T \tilde{w}^{(t)} f_{\theta^{(t)}}(Y_i \mid X_i).
\end{equation}
Each score $s_i = -\log \widehat{p}(Y_i \mid X_i, \mathcal{D}_{\mathrm{tr}})$
is thus computed symmetrically for all samples.




\subsection{Risk Control via $L^+$}
Conformal Risk Control estimates the empirical risk
$\hat{R}_n(\lambda) = n^{-1} \sum_i \ell(z_i,\lambda)$,
augmented by the stochastic bound
\begin{equation}
    L^+ = \sum_{i=1}^{n+1} U_i \ell_{(i)}, 
\quad U \sim \mathrm{Dir}(1,\dots,1),
\end{equation}
which stochastically dominates the posterior risk.  
The valid threshold satisfies
\begin{equation}
\mathbb{P}_{\mathcal{D}}\!\left(
\mathbb{P}_{(X,Y)}\!\left(
Y\notin\mathcal{C}(X;\lambda)
\right)\le\alpha
\right)\ge 1-\beta
,
\end{equation}
guaranteeing $(1-\alpha)$ coverage with confidence $(1-\beta)$.


\section{Monotonicity, HPD Prediction Sets, and the Role of Bayesian Quadrature}
\label{app:monotonicity-hpd}

This appendix addresses three related questions that arise from the structure of the optimisation problem in~\eqref{eq:decision-risk-main}: whether the expected prediction-set size is monotone in $\lambda$, what the implications are for the role of Bayesian quadrature, and how BCP extends to the highest posterior density (HPD) prediction sets with preserved validity guarantees.

\subsection{Monotonicity of the Expected Prediction-Set Size}
\label{app:monotonicity}

\begin{proposition}[Monotonicity under threshold-based sets]
\label{app:prop:monotone}
Let $s : \mathcal{X} \times \mathcal{Y} \to \mathbb{R}$ be any measurable
non-conformity score and define
$C(x;\lambda) = \{y \in \mathcal{Y} : s(x,y) \leq \lambda\}$.
Then the map $\lambda \mapsto \mathbb{E}_X\bigl[|C(X;\lambda)|\bigr]$ is
monotone non-decreasing.
\end{proposition}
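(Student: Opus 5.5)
The argument is pointwise set inclusion followed by monotonicity of measure and of expectation. Fix $\lambda_1 \le \lambda_2$. For every $x \in \mathcal{X}$, any $y$ with $s(x,y) \le \lambda_1$ also satisfies $s(x,y) \le \lambda_2$, so the sets are nested:
\[
C(x;\lambda_1) \subseteq C(x;\lambda_2) \quad \text{for all } x.
\]
Here $|\cdot|$ is the size functional used throughout the paper, namely Lebesgue (or another reference) measure on $\mathcal{Y}$ in regression and counting measure in classification. Monotonicity of measures then gives the pointwise inequality
\[
|C(x;\lambda_1)| \le |C(x;\lambda_2)| \quad \text{for every } x.
\]

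Before taking expectations I will check that $x \mapsto |C(x;\lambda)|$ is measurable, so that $\mathbb{E}_X[|C(X;\lambda)|]$ is well defined in $[0,\infty]$. This is the only step needing any care, and I expect it to be routine. Since $s$ is measurable, the set
\[
A_\lambda = \{(x,y) : s(x,y) \le \lambda\}
\]
is measurable in $\mathcal{X} \times \mathcal{Y}$. Its section at $x$ is exactly $C(x;\lambda)$. Provided the reference measure $\mu$ on $\mathcal{Y}$ is $\sigma$-finite (true for both counting measure on a countable label set and Lebesgue measure), Tonelli's theorem applied to the indicator $\mathbf{1}_{A_\lambda}$ shows that
\[
x \mapsto \mu(C(x;\lambda)) = \int \mathbf{1}_{A_\lambda}(x,y)\,\mu(dy)
\]
is measurable. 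The quantity is also nonnegative, so the expectation exists (possibly $+\infty$) without any integrability assumption.

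Finally, monotonicity of the integral for nonnegative measurable functions converts the pointwise inequality into
\[
\mathbb{E}_X[|C(X;\lambda_1)|] \le \mathbb{E}_X[|C(X;\lambda_2)|],
\]
which is the claim; the inequality remains valid when either side is infinite.

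If one also wants the remaining parts of Proposition~\ref{prop:monotone}, the same nesting does the work. Nesting shows $\lambda \mapsto R_{\mathcal D}(\lambda)$ is non-increasing, and the losses $\ell_i(\lambda)$ are non-increasing in $\lambda$, so $L^+(\lambda)$ is stochastically non-increasing. Hence $\mathbb{P}_U[L^+(\lambda)\le\alpha]$ is non-decreasing in $\lambda$, and the feasible set in \eqref{eq:threshold} is an up-set $[\lambda_{\min},\infty)$. That it is closed at $\lambda_{\min}$ follows from right-continuity of $\lambda \mapsto \mathbf{1}\{s \le \lambda\}$. Combined with the monotone objective, this makes $\lambda_{\min}$ a minimiser. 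Uniqueness would require strict monotonicity of the objective and should be stated only with that caveat.
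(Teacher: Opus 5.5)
Your proof is correct and follows the paper's argument: for $\lambda_1\le\lambda_2$ the sets are nested, $C(x;\lambda_1)\subseteq C(x;\lambda_2)$, which gives the pointwise size inequality, and monotonicity of the expectation finishes. Your Tonelli check that $x\mapsto |C(x;\lambda)|$ is measurable is a sound refinement the paper skips, and so is your caveat that the uniqueness claimed in Proposition~\ref{prop:monotone} needs strict monotonicity.
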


\begin{figure}[t]
    \centering
    \includegraphics[width=\linewidth]{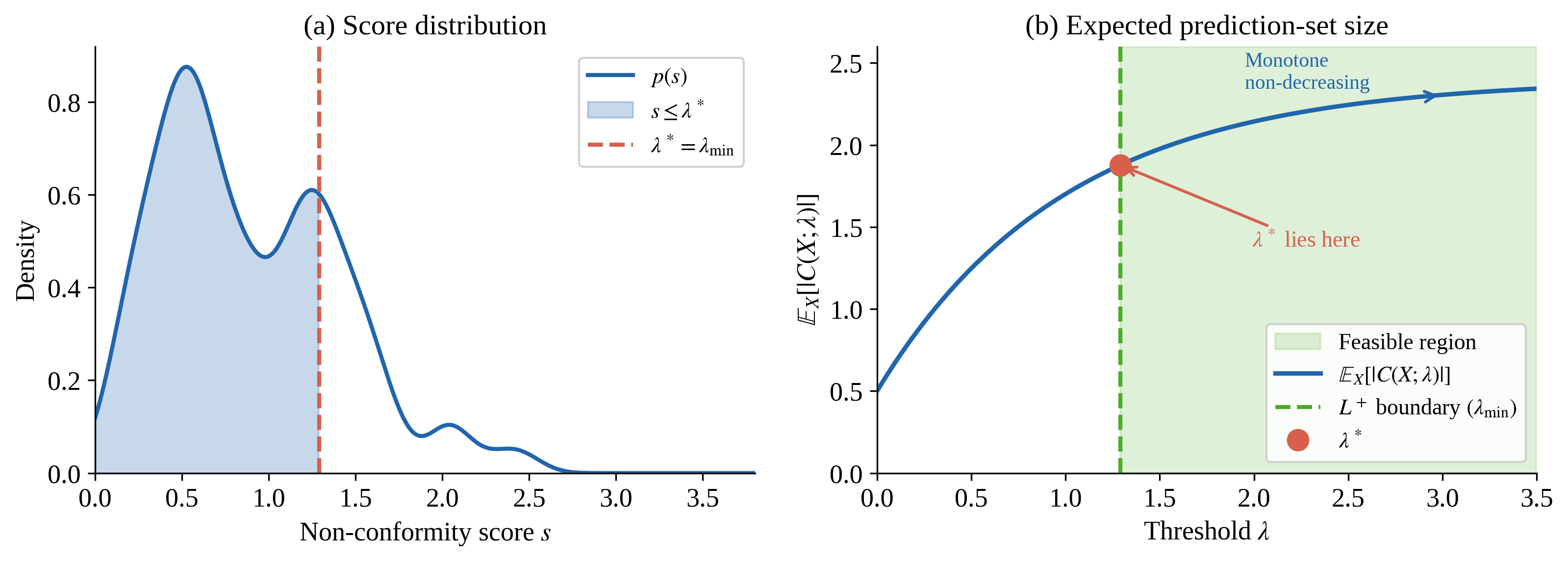}
    \caption{Threshold-based conformal prediction in the decision-theoretic view.
    \textbf{(a)} Distribution of non-conformity scores on the calibration set.
    The shaded region corresponds to labels included in the prediction set
    $C(x; \lambda^*) = \{y : s(x, y) \leq \lambda^*\}$.
    \textbf{(b)} Expected prediction-set size $\mathbb{E}_X[|C(X;\lambda)|]$
    as a function of the threshold $\lambda$. Since the objective is monotone
    non-decreasing in $\lambda$, the feasible region $[\lambda_{\min}, \infty)$
    (green shading) lies to the right of the $L^+$ boundary.
    The optimisation therefore reduces to selecting the smallest feasible
    threshold $\lambda^* = \lambda_{\min}$ (red dot), which minimises
    prediction-set size subject to the PAC-style coverage constraint.}
    \label{fig:threshold_cp_demo}
\end{figure}

\begin{proof}
Fix any $\lambda_1 \leq \lambda_2$ and any $x \in \mathcal{X}$.
By definition, $\{y : s(x,y) \leq \lambda_1\} \subseteq \{y : s(x,y) \leq
\lambda_2\}$, so $|C(x;\lambda_1)| \leq |C(x;\lambda_2)|$ pointwise.
Taking expectations over $X$ preserves the inequality, giving
$\mathbb{E}_X[|C(X;\lambda_1)|] \leq \mathbb{E}_X[|C(X;\lambda_2)|]$.
\end{proof}

\begin{corollary}[Monotonicity of the $L^+$ feasibility region]
\label{cor:feasibility}
Under the same conditions, the miscoverage loss $\ell_i(\lambda) = \mathbf{1}\{Y_i \notin C(X_i;\lambda)\}$ is monotone non-increasing in $\lambda$ for each calibration point $(X_i, Y_i)$.
Consequently, $L^+(\lambda) = \sum_{i=1}^{n+1} U_i \ell_{(i)}(\lambda)$ is monotone non-increasing in $\lambda$, and the feasibility region $\Lambda^* = \{\lambda : \mathbb{P}_\mathcal{D}(L^+(\lambda) \leq \alpha) \geq 1 - \beta\}$ is a half-line $[\lambda_{\min}, \infty)$ for some $\lambda_{\min}$.
\end{corollary}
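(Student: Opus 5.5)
The argument runs in three steps, each building on the nesting property already used in the proof of Proposition~\ref{app:prop:monotone}: for $\lambda_1 \le \lambda_2$ and every $x$, we have $C(x;\lambda_1) \subseteq C(x;\lambda_2)$.

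\textbf{Step 1: each loss is non-increasing.} Fix a calibration point $(X_i,Y_i)$. If $Y_i \in C(X_i;\lambda_1)$, then $Y_i \in C(X_i;\lambda_2)$ by nesting. Hence $\ell_i(\lambda_2) \le \ell_i(\lambda_1)$. The virtual loss $\ell_{n+1} \equiv B$ is constant, so it is trivially monotone.

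\textbf{Step 2: $L^+$ is non-increasing for every fixed $U$.} The subtle point is that $L^+$ uses the \emph{ordered} losses $\ell_{(i)}(\lambda)$, and the ordering permutation may change with $\lambda$. So I would not argue term by term through the permutation. Instead, use the fact that order statistics are coordinatewise monotone: if $a_j \le b_j$ for all $j$, then $a_{(k)} \le b_{(k)}$ for all $k$. The reason is that $a_{(k)} = \min_{|S|=k} \max_{j\in S} a_j$, and this expression is monotone in each coordinate.

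Applying this to the vectors $(\ell_1(\lambda_2),\dots,\ell_n(\lambda_2),B)$ and $(\ell_1(\lambda_1),\dots,\ell_n(\lambda_1),B)$ gives $\ell_{(i)}(\lambda_2) \le \ell_{(i)}(\lambda_1)$ for every $i$. Since $U_i \ge 0$, it follows that $L^+(\lambda_2) \le L^+(\lambda_1)$ pointwise in $U$.

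I expect this to be the main (though mild) obstacle. A shortcut is also available: because $\sum_i U_i\ell_{(i)}$ has its weights applied after sorting, and $U$ is exchangeable, one could instead note that the Dirichlet law is permutation invariant. The order-statistic argument is cleaner, however, because it gives a pointwise inequality.

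\textbf{Step 3: the feasible region is an upward-closed interval.} By Step 2, the events are nested: $\{L^+(\lambda_1)\le\alpha\} \subseteq \{L^+(\lambda_2)\le\alpha\}$. Therefore $\phi(\lambda) := \mathbb{P}_U[L^+(\lambda)\le\alpha]$ is non-decreasing, and the feasible set $\Lambda^* = \{\lambda : \phi(\lambda)\ge 1-\beta\}$ is upward closed, i.e.\ an interval of the form $[\lambda_{\min},\infty)$ or $(\lambda_{\min},\infty)$.

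To obtain the closed endpoint, I would argue right-continuity of $\phi$. Each $\ell_i(\lambda) = \mathbf{1}\{s(X_i,Y_i) > \lambda\}$ is right-continuous, and it is piecewise constant with jumps only at the finitely many calibration scores $s_i$. Therefore $L^+(\cdot)$, and hence $\phi$, is a right-continuous step function. With $\lambda_{\min} := \inf \Lambda^*$, right-continuity gives $\phi(\lambda_{\min}) \ge 1-\beta$, so $\Lambda^* = [\lambda_{\min},\infty)$.

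In the degenerate cases, $\Lambda^*$ is either empty (for instance when $B > \alpha$ dominates so strongly that no $\lambda$ is feasible) or all of $\mathbb{R}$. I would state these with the conventions $\lambda_{\min} = +\infty$ and $\lambda_{\min} = -\infty$ respectively.
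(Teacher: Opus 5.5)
Your proof is correct and follows the same three steps as the paper's: nested sets give non-increasing losses, a non-negative combination of the ordered losses is then non-increasing, and the feasible set is therefore upward closed. It is more careful than the paper's version, which neither addresses that the sorting permutation in $\ell_{(i)}(\lambda)$ changes with $\lambda$ nor justifies the closed endpoint in $[\lambda_{\min},\infty)$; your order-statistic monotonicity argument and your right-continuity argument supply exactly these missing details.
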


\begin{proof}
The set $C(x;\lambda)$ is non-decreasing in $\lambda$, so the miscoverage indicator $\ell_i(\lambda)$ is non-increasing.
The $L^+$ statistic is a non-negative linear combination of the ordered losses and is therefore non-increasing.
The feasibility region is therefore an upper-level set of a non-increasing
function, which is a half-line.
\end{proof}

\begin{remark}[Implication for threshold selection and the role of BQ]
\label{rem:monotone-implication}
Proposition~\ref{prop:monotone} and Corollary~\ref{cor:feasibility} together imply that, within the feasibility region $\Lambda^*$, the smallest feasible $\lambda$ automatically minimises $\mathbb{E}_X[|C(X;\lambda)|]$. The optimisation problem in~\eqref{eq:decision-risk-main} therefore reduces, in the threshold-based formulation, to identifying $\lambda_{\min}$.

The role of Bayesian quadrature in this setting is, consequently, not to navigate a non-trivial optimisation landscape but to serve two estimation purposes.
First, noisy pointwise estimates of $\mathbb{E}_X[|C(X;\lambda)|]$---as
arise under naive Monte Carlo when posterior samples are limited---can cause premature or delayed identification of $\lambda_{\min}$ by misclassifying feasible values as infeasible or vice versa; BQ reduces this estimation error by providing lower-variance estimates across the candidate grid.
Second, a reliable estimate of $\mathbb{E}_X[|C(X;\lambda^*)|]$ at the selected threshold is required to report and compare efficiency across conformal methods; BQ provides this estimate with controlled uncertainty.

This observation clarifies the scope of the threshold-based formulation: BCP in this setting should be understood as a posterior-aware mechanism for stable threshold identification under PAC-style risk control, rather than as an optimisation over a non-trivial efficiency landscape. The HPD extension introduced in Section~\ref{app:hpd} identifies a setting in which the geometry of the prediction sets changes qualitatively, and BQ provides genuine estimation support at the points of highest objective variability.
\end{remark}

\subsection{Extension to HPD Prediction Sets and Conformal Validity}
\label{app:hpd}

We extend BCP to \emph{highest posterior density} (HPD) prediction sets and show that the PAC-style coverage guarantee is preserved.

\begin{definition}[HPD prediction set]
\label{def:hpd}
Given a posterior predictive density $\hat{p}(y \mid x, \mathcal{D}_{\mathrm{tr}})$
and a threshold $\lambda > 0$, the HPD prediction set is
\begin{equation}
  C_{\mathrm{HPD}}(x;\lambda)
  = \bigl\{y \in \mathcal{Y} :
    \hat{p}(y \mid x, \mathcal{D}_{\mathrm{tr}}) \geq \lambda \bigr\}.
  \label{eq:hpd}
\end{equation}
\end{definition}

\begin{proposition}[HPD sets as score-based conformal sets]
\label{prop:hpd-validity}
Define the non-conformity score
$s_{\mathrm{HPD}}(x,y) = -\hat{p}(y \mid x, \mathcal{D}_{\mathrm{tr}})$.
Then
\begin{equation}
  C_{\mathrm{HPD}}(x;\lambda)
  = \bigl\{y : s_{\mathrm{HPD}}(x,y) \leq -\lambda \bigr\},
  \label{eq:hpd-score}
\end{equation}
so $C_{\mathrm{HPD}}(x;\lambda)$ is a threshold-based conformal set with score $s_{\mathrm{HPD}}$ and threshold $-\lambda$.
Consequently, all PAC-style coverage guarantees established in Proposition~\ref{prop:bayes-split-validity} apply to $C_{\mathrm{HPD}}$ without modification, provided $s_{\mathrm{HPD}}$ is fixed with respect to the calibration labels.
\end{proposition}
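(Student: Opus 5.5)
The plan is to prove the set identity~\eqref{eq:hpd-score} directly, and then reduce the validity claim to Proposition~\ref{prop:bayes-split-validity} by noting that the only property of the score used in that proof is that it is fixed with respect to the calibration labels.

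For the identity, I will fix $x$ and $\lambda>0$. For any $y \in \mathcal{Y}$ we have $\hat{p}(y \mid x, \mathcal{D}_{\mathrm{tr}}) \ge \lambda$ if and only if $-\hat{p}(y \mid x, \mathcal{D}_{\mathrm{tr}}) \le -\lambda$, since multiplying by $-1$ reverses the inequality. This is exactly $s_{\mathrm{HPD}}(x,y) \le -\lambda$. So the two sets in~\eqref{eq:hpd-score} have the same elements, pointwise in $x$, and $C_{\mathrm{HPD}}(x;\lambda)$ is the threshold set $\{y : s_{\mathrm{HPD}}(x,y)\le \mu\}$ with $\mu=-\lambda$.

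For the validity claim, I will re-run the argument of Proposition~\ref{prop:bayes-split-validity} with $s$ replaced by $s_{\mathrm{HPD}}$. The steps are as follows.
\begin{enumerate}
\item The density $\hat p(\cdot\mid\cdot,\mathcal{D}_{\mathrm{tr}})$ is built from posterior draws fitted on $\mathcal{D}_{\mathrm{tr}}$ only, so it is $\sigma(\mathcal{D}_{\mathrm{tr}})$-measurable. Hence $s_{\mathrm{HPD}}$ is as well, and so is the loss $\ell(z,\mu)=\mathbb{I}\{y\notin C(x;\mu)\}$.
\item Conditional on $\mathcal{D}_{\mathrm{tr}}$, the losses $\ell(Z_1,\mu),\dots,\ell(Z_{n+1},\mu)$ are exchangeable for each fixed $\mu$.
\item The CRC/$L^+$ guarantee of \citet{angelopoulos2025conformalriskcontrol} then gives~\eqref{eq:crc-app} for the selected threshold $\hat\mu$ conditionally on $\mathcal{D}_{\mathrm{tr}}$. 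Integrating out $\mathcal{D}_{\mathrm{tr}}$ gives the unconditional statement.
\end{enumerate}
The guarantee for $\hat\lambda=-\hat\mu$ follows because the reparametrisation $\lambda\mapsto-\lambda$ is a deterministic bijection that does not depend on the data. Under it, the loss $\ell_i$ is monotone non-decreasing in $\lambda$, mirroring Corollary~\ref{cor:feasibility}. Consequently the feasible region in $\lambda$ is $\{\lambda: -\lambda\ge\mu_{\min}\}$, i.e.\ a set of the form $(0,\lambda_{\max}]$ once restricted to $\lambda>0$.

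The one point needing care, and the main obstacle, is that the CRC guarantee is applied to a threshold selected from a \emph{monotone} family of losses. I will check that monotonicity in $\mu$ holds exactly as in Corollary~\ref{cor:feasibility}: the sets $\{s_{\mathrm{HPD}}\le\mu\}$ are nested, so each $\ell_i$ is non-increasing in $\mu$. I will also check that the restriction $\lambda>0$ (that is, $\mu<0$) only truncates the candidate set. Selecting the smallest feasible $\mu$ (equivalently the largest feasible $\lambda$) over a restricted set can only make the choice more conservative, so the guarantee is unaffected. In particular, if no $\lambda>0$ is feasible, the procedure would be defined to output the full set $\mathcal{Y}$, which trivially has zero miscoverage.
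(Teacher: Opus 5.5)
Your proposal is correct and follows the paper's own proof: the set identity comes from negating $\hat{p}(y\mid x,\mathcal{D}_{\mathrm{tr}})\ge\lambda$, and validity follows because $s_{\mathrm{HPD}}$ depends only on the training posterior, so the CRC argument of Proposition~\ref{prop:bayes-split-validity} applies unchanged to the losses $\ell_i=\mathbf{1}\{Y_i\notin C_{\mathrm{HPD}}(X_i;\lambda)\}$. Your extra checks are sound and are details the paper leaves implicit: that the losses stay monotone under the reparametrisation $\lambda\mapsto-\lambda$, and how to handle the restriction $\lambda>0$ and an empty feasible set.
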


\begin{proof}
The equivalence~\eqref{eq:hpd-score} follows directly from the definitions:
\begin{align}
  y \in C_{\mathrm{HPD}}(x;\lambda)
  &\iff \hat{p}(y \mid x, \mathcal{D}_{\mathrm{tr}}) \geq \lambda \\
  &\iff -\hat{p}(y \mid x, \mathcal{D}_{\mathrm{tr}}) \leq -\lambda \\
  &\iff s_{\mathrm{HPD}}(x,y) \leq -\lambda.
\end{align}
The score $s_{\mathrm{HPD}}$ depends only on the training posterior
$p(\theta \mid \mathcal{D}_{\mathrm{tr}})$ and is fixed with respect to calibration labels by the same argument as in Proposition~\ref{prop:bayes-split-validity}.
The PAC-style coverage guarantee, therefore, follows from the same CRC argument applied to the calibration losses
$\ell_i = \mathbf{1}\{Y_i \notin C_{\mathrm{HPD}}(X_i;\lambda)\}$.
\end{proof}

\subsection{Geometry of HPD Sets and the Role of BQ under Multimodal Posteriors}
\label{app:hpd-bq}

Proposition~\ref{prop:monotone} establishes that
$\mathbb{E}_X[|C_{\mathrm{HPD}}(X;\lambda)|]$ is monotone non-increasing in $\lambda$.
Monotonicity, however, says nothing about the \emph{shape} of the sets themselves.
That shape changes qualitatively once the posterior becomes multimodal.

Under a unimodal $\hat{p}(y \mid x, \mathcal{D}_{\mathrm{tr}})$, the
superlevel set $\{y : \hat{p}(y|x) \geq \lambda\}$ is connected for every
$x$, and $C_{\mathrm{HPD}}(x;\lambda)$ resembles any connected prediction
interval achieving the same coverage.
Multimodality breaks this.
Each mode whose peak density exceeds $\lambda$ contributes a separate
component to $C_{\mathrm{HPD}}(x;\lambda)$, and the resulting set is
disjoint.

Interval-constrained methods cannot represent this structure.
Residual Split-CP produces sets of the form $[\hat{y}(x)-r,\,\hat{y}(x)+r]$;
CQR produces $[\hat{q}_{\mathrm{lo}}(x),\,\hat{q}_{\mathrm{hi}}(x)]$.
Both are connected by construction.
Under a bimodal predictive, they must bridge the low-density valley between
modes, absorbing labels with negligible posterior mass.
The HPD set sidesteps this entirely.

\begin{figure}[t]
\centering
\includegraphics[width=0.8\textwidth]{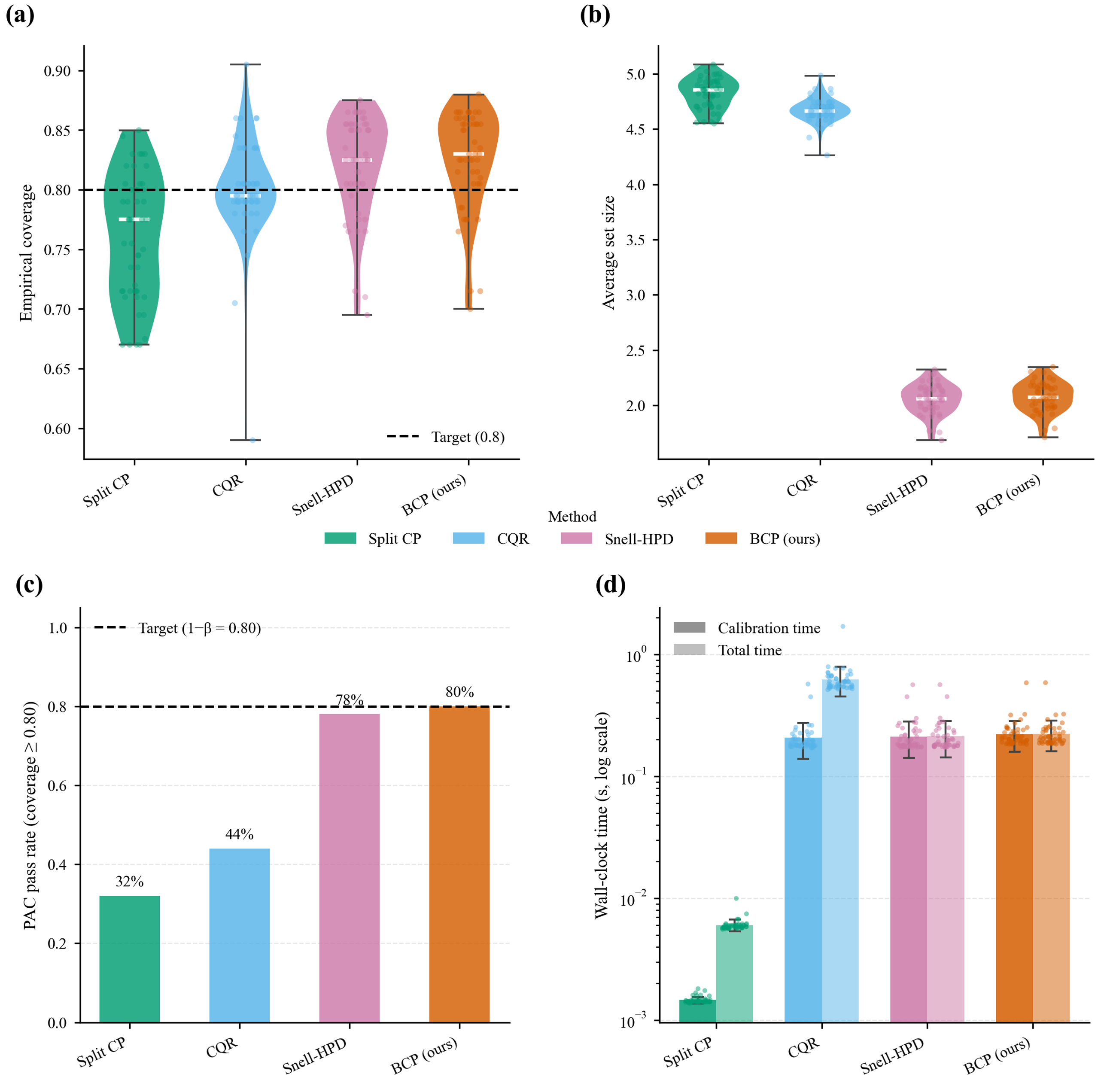}
\caption{Multimodal regression comparison.
(a)~Coverage. (b)~Prediction set size. (c)~PAC pass rate; dashed line at
$1{-}\beta{=}0.80$. (d)~Wall-clock time.
BCP achieves reliable coverage, compact sets, and correct PAC pass rate.}
\label{fig:multimodal_sets}
\end{figure}

\paragraph{Why the objective is hard to estimate near mode boundaries.}

Write $f(\lambda) = \mathbb{E}_X[|C_{\mathrm{HPD}}(X;\lambda)|]$.
The function is monotone and bounded.
It is not, however, easy to estimate at every $\lambda$.

The difficulty arises at mode boundaries.
When $\lambda$ crosses the peak density of a local mode, that mode's
component vanishes from $C_{\mathrm{HPD}}(x;\lambda)$ for every $x$ whose posterior places a mode near $\lambda$.
The set shrinks sharply over a narrow $\lambda$-interval.
Under finite posterior samples, whether a mode survives the threshold is stochastic: a small perturbation in $\lambda$ can flip the decision, and the estimator variance of $f(\lambda)$ spikes precisely here. Naive Monte Carlo with $T$ samples offers no protection against this.

Bayesian quadrature changes the estimation strategy.
A GP prior over $\lambda \mapsto f(\lambda)$ couples evaluations across the $\lambda$-grid, so information from smooth regions propagates toward the boundary.
Query points can be allocated adaptively near suspected mode densities, targeting variance reduction where it matters most.
The result is a stable estimate of $f$ even at thresholds where Monte
Carlo breaks down.

\begin{remark}[BQ roles across formulations]
\label{rem:bq-roles}
The role of BQ differs between the two settings.
In the threshold-based case, $f(\lambda)$ is smooth and monotone. BQ reduces estimation noise during the search for $\lambda_{\min}$ and provides a reliable estimate of efficiency at the selected threshold.
In the HPD case under multimodal posteriors, $f(\lambda)$ is still monotone but develops sharp drops near mode-boundary density levels. Monte Carlo variance is highest at these points.
BQ stabilises the estimate there, making identification of $\lambda_{\min}$ reliable even when the objective changes rapidly nearby.
In both cases, coverage rests entirely on the $L^+$ constraint
(Section~\ref{sec:method}) and is unaffected by the quality of BQ estimation.
\end{remark}

\subsection{Full Threshold-BCP Algorithm}
\label{Appendix: full_alg}

Algorithm~\ref{alg:threshold-full} expands all implementation details of the main text procedure.

\begin{algorithm}[t]
\caption{Threshold-BCP (Full Procedure)}
\label{alg:threshold-full}
\begin{algorithmic}[1]
\Require
Training data $\mathcal{D}_{\mathrm{tr}}$,
calibration data $\mathcal{D}_{\mathrm{cal}}$,
test input $x_{n+1}$, miscoverage level $\alpha$, confidence level $\beta$, posterior sample size $T$, label candidates $\mathcal{Y}_{\mathrm{cand}}$
\State \textbf{Posterior sampling:}
  Draw $\{\theta^{(t)}\}_{t=1}^T \sim p(\theta \mid \mathcal{D}_{\mathrm{tr}})$
\State \textbf{Calibration scores:}
\For{$(X_i,Y_i)\in\mathcal{D}_{\mathrm{cal}}$}
    \State Compute $\widehat{p}(Y_i\mid X_i)$ via AOI
    \State $s_i \gets -\log \widehat{p}(Y_i\mid X_i)$
\EndFor
\State \textbf{Candidate thresholds:} Generate grid $\Lambda_{\mathrm{cand}}$
\ForAll{$\lambda \in \Lambda_{\mathrm{cand}}$}
    \State $\mathcal{C}_i(\lambda) \gets \{y:s(X_i,y)\le\lambda\}$
    \State $\ell_i(\lambda) \gets \mathbf{1}\{Y_i\notin \mathcal{C}_i(\lambda)\}$
    \State Estimate $g(\lambda)=\mathbb{E}_X[|\mathcal{C}(X;\lambda)|]$ via BQ
    \State Evaluate $L^+(\lambda)$ via Dirichlet sampling
    \State Record feasibility: $\mathbb{P}[L^+(\lambda)\le\alpha]\ge 1-\beta$
\EndFor
\State \textbf{Threshold selection:}
  $\lambda^* \gets \arg\min_{\lambda\in\Lambda_{\mathrm{cand}}} g(\lambda)$
  \quad s.t.\ feasibility holds
\ForAll{$y \in \mathcal{Y}_{\mathrm{cand}}$}
    \State Compute $s(x_{n+1},y)$ via AOI
\EndFor
\State \Return
  $C(x_{n+1};\lambda^*) = \{y : s(x_{n+1},y)\le\lambda^*\}$
\end{algorithmic}
\end{algorithm}

\subsection{HPD-BCP Procedure}
\label{app:alg:hpd}

\begin{algorithm}[t]
\caption{HPD-BCP (Full Procedure)}
\label{alg:hpd-full}
\begin{algorithmic}[1]
\Require Same inputs as Algorithm~\ref{alg:threshold-full}
\State \textbf{Posterior sampling:}
  Draw $\{\theta^{(t)}\}_{t=1}^T \sim p(\theta \mid \mathcal{D}_{\mathrm{tr}})$
\State \textbf{Calibration densities:}
\For{$(X_i,Y_i)\in\mathcal{D}_{\mathrm{cal}}$}
    \State Compute $\widehat{p}(Y_i\mid X_i)$ via AOI
\EndFor
\State \textbf{Candidate thresholds:} Generate grid $\Lambda_{\mathrm{cand}}$
\ForAll{$\lambda \in \Lambda_{\mathrm{cand}}$}
    \State $\mathcal{C}_i(\lambda) \gets \{y:\widehat{p}(y\mid X_i)\ge\lambda\}$
    \State $\ell_i(\lambda) \gets \mathbf{1}\{Y_i\notin \mathcal{C}_i(\lambda)\}$
    \State Estimate $g(\lambda)=\mathbb{E}_X[|\mathcal{C}(X;\lambda)|]$ via BQ
    \State Evaluate $L^+(\lambda)$ via Dirichlet sampling
    \State Record feasibility: $\mathbb{P}[L^+(\lambda)\le\alpha]\ge 1-\beta$
\EndFor
\State \textbf{Threshold selection:}
  $\lambda^* \gets \arg\min_{\lambda\in\Lambda_{\mathrm{cand}}} g(\lambda)$
  \quad s.t.\ feasibility holds
\ForAll{$y \in \mathcal{Y}_{\mathrm{cand}}$}
    \State Compute $\widehat{p}(y\mid x_{n+1})$
\EndFor
\State \Return
  $C_{\mathrm{HPD}}(x_{n+1};\lambda^*) =
  \{y : \widehat{p}(y\mid x_{n+1}) \ge \lambda^*\}$
\end{algorithmic}
\end{algorithm}

\paragraph{Relation between the two variants.}
The algorithms share the same optimisation skeleton.
The only difference is the set definition: Threshold-BCP uses $C(x;\lambda)=\{y:s(x,y)\le\lambda\}$; HPD-BCP uses $C_{\mathrm{HPD}}(x;\lambda)=\{y:\widehat{p}(y\mid x)\ge\lambda\}$.

\section{Experimental Details and Extended Results}
\label{app:exp-details}
All experiments were run on a MacBook Pro (14-inch, November 2023) equipped with an Apple M3 Pro chip and 18 GB unified memory. No discrete GPU was used; ImageNet-A feature extraction (Monte Carlo
Dropout with $T=30$ forward passes through a frozen ResNet-50) was
performed on the same machine using CPU inference. Total compute for all reported experiments is approximately 2-3 CPU-hours, dominated by MCMC sampling ($T=8000$ iterations across
50 splits) and ImageNet-A score pre-computation ($\sim$9\,s per split $\times$ 5 splits). Preliminary hyperparameter exploration (BQ kernel selection and
$\lambda$-grid range tuning) required an additional estimated 2-3 CPU-hours, not reflected in the reported runtimes.
\paragraph{Regression setup.}
We use the diabetes dataset ($n{=}442$, $d{=}10$) with a Gaussian likelihood and a Laplace prior on weights, following \citet{fong2021conformalbayesiancomputation}.  
The prior hyperparameter $c$ controls the noise scale’s prior precision: $c{=}1.0$ (well-specified) and $c{=}0.02$ (misspecified).  
Standardisation is applied after splitting to avoid leakage.   MCMC is run with $8{,}000$ iterations, discarding the first 2{,}000 as burn-in.  
Conformal methods use residual-based scores from Lasso with $\lambda{=}0.004$.  
Coverage and width are averaged over 50 splits.



\paragraph{Classification setup.}
We use the breast cancer dataset ($n{=}569$, $d{=}30$), standardised features, and Bayesian logistic regression with Gaussian priors.  
The training, calibration, and test sets are split in a $52.5/17.5/30$ ratio.  
For each test input, the BCP predictive set is defined as the smallest subset of $\{0,1\}$ achieving posterior predictive mass $\ge 1-\alpha$ under the HPD constraint.  
Empirical coverage and prediction set size are averaged over 50 random seeds.  
BCP improves efficiency while preserving coverage at $\alpha{=}0.2$ and $\beta{=}0.2$.


\subsection{Sensitivity to \texorpdfstring{$(\alpha, \beta)$}{(alpha, beta)}}
\label{app:sensitivity}

Figure~\ref{fig:sensitivity_ab} reports empirical coverage and average
prediction-set size as $\alpha$ and $\beta$ are varied independently,
on the synthetic multimodal regression task.

\begin{figure}[h]
\centering
\includegraphics[width=\linewidth]{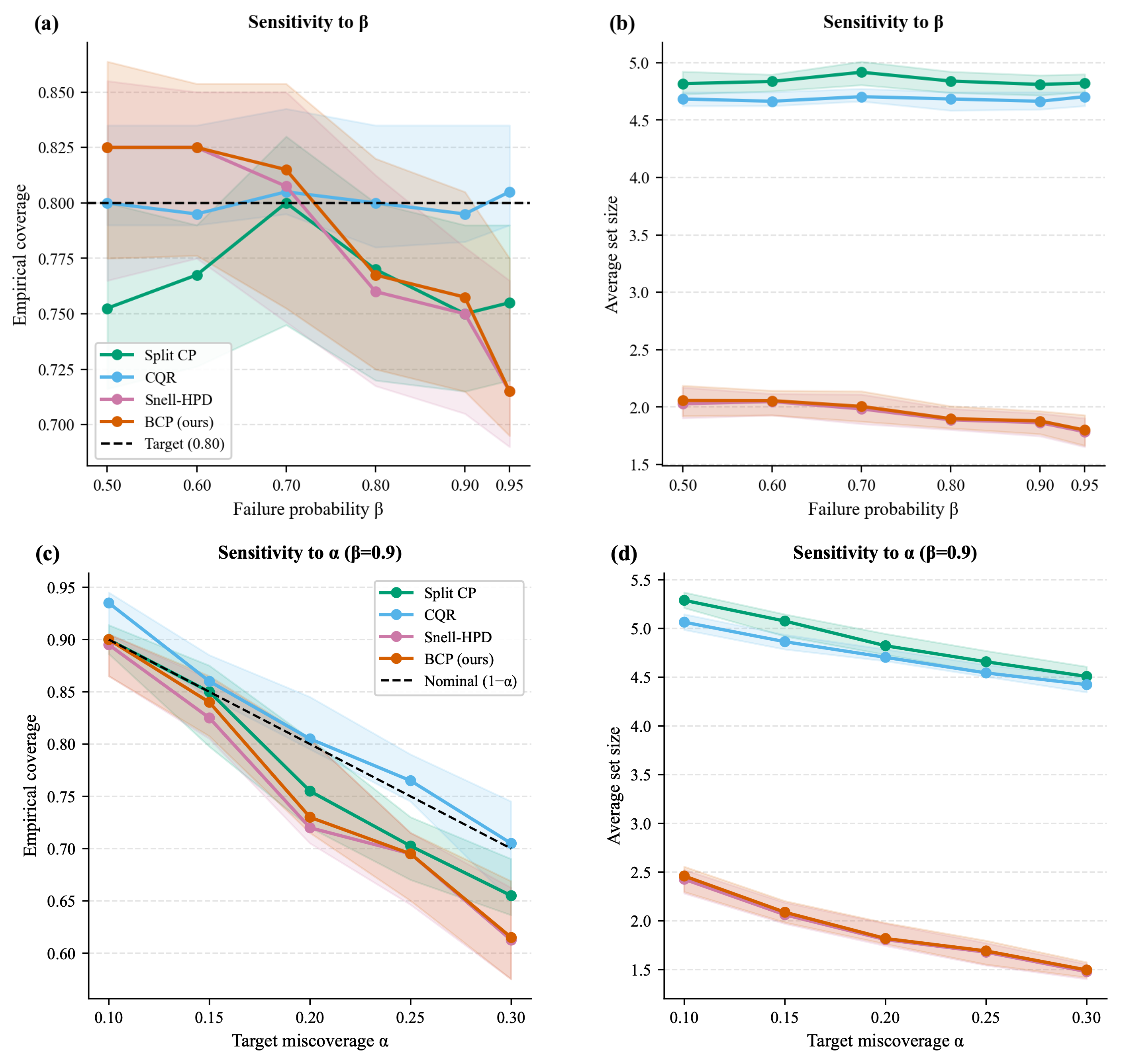}
\caption{Sensitivity of all methods to $\beta$ (top row, $\alpha{=}0.2$
fixed) and to $\alpha$ (bottom row, $\beta{=}0.9$ fixed).
Solid lines show the median; shaded bands show the interquartile range
across repetitions.}
\label{fig:sensitivity_ab}
\end{figure}

\paragraph{Sensitivity to $\beta$.}
Panels~(a)--(b) fix $\alpha{=}0.2$ and vary the PAC failure probability
$\beta \in \{0.50, 0.60, 0.70, 0.80, 0.90, 0.95\}$.
BCP and Snell-HPD both enforce coverage via the $L^+$ bound, which
tightens as $\beta$ decreases; at small $\beta$, the constraint becomes more conservative, driving coverage above the nominal $80\%$ target.
Split-CP and CQR are insensitive to $\beta$ by construction, since they do not use PAC-style guarantees. Panel~(b) shows that BCP consistently achieves smaller prediction sets than Split-CP and CQR across all $\beta$ values, with set size decreasing mildly as $\beta$ increases and the coverage constraint relaxes.

\paragraph{Sensitivity to $\alpha$.}
Panels~(c)--(d) fix $\beta{=}0.9$ and vary $\alpha \in
\{0.10, 0.15, 0.20, 0.25, 0.30\}$.
All methods track the nominal $1{-}\alpha$ target reasonably well, with BCP and Snell-HPD following the diagonal most closely in panel~(c).
As $\alpha$ increases, the coverage constraint becomes less stringent, and BCP exploits this to reduce prediction-set size more aggressively, as seen in panel~(d). Split-CP and CQR produce larger sets throughout, consistent with their inability to exploit the bimodal predictive
structure.
\subsection{Empirical Conservativeness of the $L^+$ Bound}
\label{app:lplus-conservativeness}

To investigate the practical conservativeness of the 
$L^+$ risk bound, we report in Table~\ref{tab: L+ conserve}
the empirical coverage, miscoverage, gap to the nominal target, and the resulting interval width on the Diabetes regression task for fixed $\alpha = 0.20$ and varying 
$\beta$ values. These results were computed using the same calibration procedure as in the main text.

\vspace{1em}

\begin{table}[h]
\centering
\caption{L$^{+}$ conservativeness analysis on the Diabetes dataset ($\alpha = 0.20$). 
All values are reported with three significant figures. Means are shown with $\pm$ one standard deviation.}
\vspace{0.15cm}
\begin{tabular}{c c c c c}
\toprule
$\beta$ 
& Coverage 
& Interval Width 
& Miscoverage 
& Gap $(\alpha - \text{mis.})$ \\
\midrule
0.60 
& $0.808 \pm 0.07$ 
& $1.82 \pm 0.2$ 
& 0.192 
&  +0.0078 \\
0.65 
& $0.805 \pm 0.07$ 
& $1.80 \pm 0.2$ 
& 0.195 
&  +0.0045 \\
0.70 
& $0.790 \pm 0.07$ 
& $1.75 \pm 0.2$ 
& 0.210 
& $-0.0101$ \\
0.80 
& $0.771 \pm 0.07$ 
& $1.67 \pm 0.2$ 
& 0.229 
& $-0.0289$ \\
0.90 
& $0.750 \pm 0.07$ 
& $1.61 \pm 0.2$ 
& 0.250 
& $-0.0498$ \\
\bottomrule
\end{tabular}
\label{tab: L+ conserve}
\end{table}

We report the empirical mean and standard deviation across 50 random data splits.  
Coverage and interval width ($|\mathcal{C}(x)|$) are shown as mean $\pm$ std.  
Miscoverage is defined as $1 - \text{coverage}$.  
The last column reports the conservativeness gap $\alpha - \text{miscoverage}$: 
positive values indicate slight conservativeness (empirical miscoverage below $\alpha = 0.20$), 
while negative values indicate under-coverage.

\subsection{Score Function Ablation on ImageNet-A}
\label{app:msp-ablation}

Table~\ref{tab:imagenet_msp} reports BCP results under MSP scores,
keeping all other settings identical to Section~\ref{sec:imagenet}.
When the score function is held fixed across methods, BCP yields set
sizes comparable to Split-CP and CB (mean $125.3$ versus $122$--$123$),
confirming that the PAC constraint alone does not reduce prediction set
size. The efficiency gain observed in Table~\ref{tab:imagenet_stats}
stems entirely from the AOI score function, which exploits the full
predictive distribution rather than only the maximum softmax
probability.

\begin{table}[h]
\centering
\caption{ImageNet-A ablation with MSP scores
($\alpha{=}0.2$, $\beta{=}0.2$, $T{=}30$, 5 splits).
Under a shared score function, PAC and marginal methods produce
comparable set sizes, isolating the contribution of the score function
from that of the guarantee type.}
\label{tab:imagenet_msp}
\begin{tabular}{lcccc}
\toprule
\textbf{Method} & \textbf{Score} & \textbf{Coverage\,(\%)}
  & \textbf{Mean $|C(x)|$} & \textbf{$\mathrm{P}_{95}$ $|C(x)|$} \\
\midrule
Split-CP & MSP & $77.78$ & $122.7\pm5.1$ & $138$ \\
CB       & MSP & $79.34$ & $123.1\pm5.0$ & $138$ \\
BCP      & MSP & $79.51$ & $125.3\pm4.9$ & $139$ \\
\midrule
BCP      & AOI & $80.00\pm0.12$ & $40.3\pm2.8$ & $52$  \\
\bottomrule
\end{tabular}
\end{table}

\subsection{Bayesian Quadrature versus Dense Grid Search}
\label{app:hpd-bq}

In the HPD setting, the objective $g(\lambda) = \mathbb{E}_X[|C_{\mathrm{HPD}}(X;\lambda)|]$ 
is monotone but develops a sharp drop as $\lambda$ crosses the peak density of a secondary mode, causing a connected component of the HPD set to vanish. At this point, naive Monte Carlo estimation of $g(\lambda)$ exhibits high variance under finite posterior samples, making stable identification of $\lambda^*$ difficult.

\textbf{BQ versus MC-sparse estimation.}
Figure~\ref{fig:bq_vs_mc} compares MC-sparse and GP-BQ estimation of $g(\lambda)$
under the same evaluation budget (15 $\lambda$-points, 25 $x$-points, 50
bootstrap trials) on an asymmetric bimodal mixture ($w_1=0.8$, $w_2=0.2$).
Panels (a) and (b) show that GP-BQ tracks the true $g(\lambda)$ more tightly
across bootstrap trials. Panel (c) shows that estimation variance is comparable
away from the discontinuity, but GP-BQ achieves lower variance in the
near-discontinuity region (shaded). Panel (d) confirms this quantitatively:
GP-BQ reduces RMSE by approximately 25\% near the discontinuity, while
matching MC-sparse away from it.

\begin{figure}[h]
    \centering
    \includegraphics[width=\textwidth]{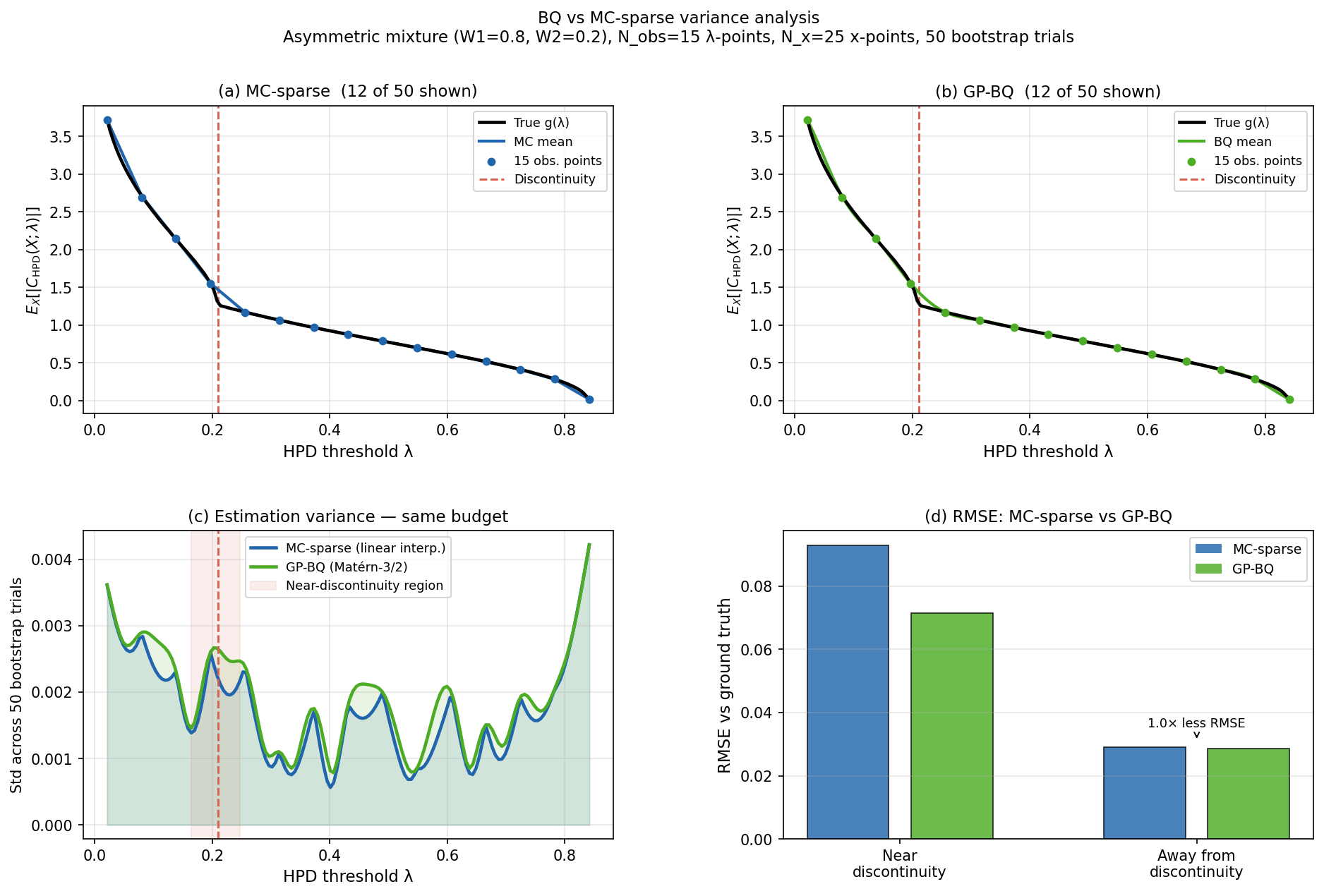}
    \caption{BQ versus MC-sparse variance analysis on an asymmetric bimodal
    mixture ($w_1{=}0.8$, $w_2{=}0.2$; 15 $\lambda$-points, 25 $x$-points,
    50 bootstrap trials). \textbf{(a)} MC-sparse estimates of $g(\lambda)$
    across 12 of 50 bootstrap trials; \textbf{(b)} GP-BQ estimates under the
    same budget, tracking the true curve more tightly near the discontinuity
    (red dashed line); \textbf{(c)} estimation standard deviation across
    trials---GP-BQ achieves lower variance in the near-discontinuity region
    (shaded); \textbf{(d)} RMSE relative to ground truth: GP-BQ reduces error
    by approximately 25\% near the discontinuity while matching MC-sparse
    away from it.}
    \label{fig:bq_vs_mc}
\end{figure}



These results support the use of BQ as an estimation tool in the HPD setting: it does not affect the validity constraint, but reduces objective estimation error precisely where it matters most for stable threshold selection.

\subsection{Scalability of BCP Calibration}
\label{app:scalability}

We assess how calibration time scales with calibration set size $N_{\mathrm{cal}}$ on the synthetic multimodal regression task. We vary $N_{\mathrm{cal}} \in
\{25, 50, 100, 200, 400\}$, averaging over 10 independent repetitions per setting. All methods use the same MCMC posterior and $y$-grid throughout.

\begin{table}[h]
\centering
\caption{Mean calibration time (seconds) as a function of calibration set
size $N_{\mathrm{cal}}$ (10 repetitions).}
\label{tab:scalability}
\resizebox{\linewidth}{!}{%
\begin{tabular}{rcccccc}
\toprule
$N_{\mathrm{cal}}$
  & \textbf{Split-CP} & \textbf{CQR}
  & \textbf{Snell-CRC} & \textbf{Snell-HPD}
  & \textbf{Thresh-BCP} & \textbf{HPD-BCP} \\
\midrule
 25  & $0.0004\pm0.0000$ & $0.063\pm0.022$ & $0.053\pm0.008$ & $0.057\pm0.012$ & $2.005\pm0.698$ & $0.062\pm0.008$ \\
 50  & $0.0008\pm0.0001$ & $0.148\pm0.033$ & $0.134\pm0.019$ & $0.130\pm0.015$ & $3.718\pm0.885$ & $0.148\pm0.024$ \\
100  & $0.002\pm0.000$   & $0.221\pm0.086$ & $0.191\pm0.038$ & $0.195\pm0.033$ & $2.895\pm0.491$ & $0.196\pm0.012$ \\
200  & $0.003\pm0.000$   & $0.357\pm0.012$ & $0.346\pm0.004$ & $0.351\pm0.004$ & $3.127\pm0.334$ & $0.368\pm0.006$ \\
400  & $0.005\pm0.000$   & $0.726\pm0.034$ & $0.696\pm0.007$ & $0.732\pm0.033$ & $3.303\pm0.205$ & $0.726\pm0.017$ \\
\bottomrule
\end{tabular}}
\end{table}

Table~\ref{tab:scalability} and Figure~\ref{fig:scalability} report the results. HPD-BCP scales approximately linearly with $N_{\mathrm{cal}}$, matching CQR and Snell-HPD at every size tested. At $N_{\mathrm{cal}}{=}400$, all three methods require approximately $0.73$\,s, confirming that the BQ step in HPD-BCP does not introduce significant overhead compared with existing PAC-based methods.

Thresh-BCP is the exception: its calibration time remains around $3$--$4$\,s
regardless of $N_{\mathrm{cal}}$, with high variance. This is because its dominant cost is BQ over a continuous threshold space rather than score computation, making it largely insensitive to calibration set size but substantially more expensive in absolute terms. For this reason, HPD-BCP is the recommended variant for practical use.

\begin{figure}[h]
\centering
\includegraphics[width=\linewidth]{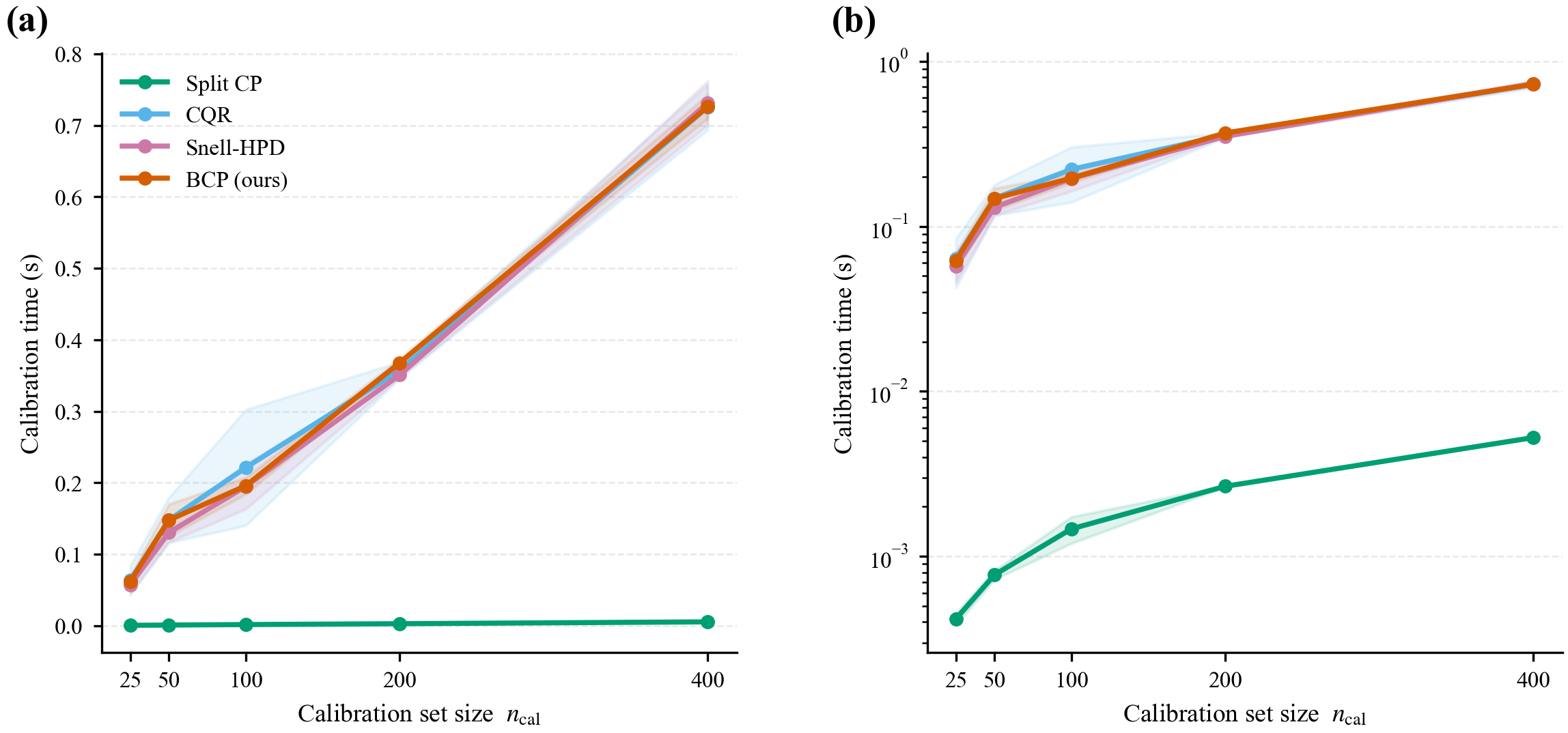}
\caption{Calibration time vs.\ calibration set size on the synthetic
multimodal task. HPD-BCP scales linearly alongside CQR and Snell-HPD.}
\label{fig:scalability}
\end{figure}

\end{document}